\newtheorem{theorem}{\bf Theorem}[section]
\newtheorem{problem}{\bf Problem}[section]
\newtheorem{definition}{\bf Definition}[section]
\newtheorem{remark}{\bf Remark}[section]
\newcommand{\be}{\begin{equation}}
\newcommand{\ee}{\end{equation}}
\newcommand{\ben}{\begin{equation*}}
\newcommand{\een}{\end{equation*}}
\newcommand{\bea}{\begin{eqnarray}}
\newcommand{\eea}{\end{eqnarray}}
\newcommand{\bean}{\begin{eqnarray*}}
\newcommand{\eean}{\end{eqnarray*}}
\newcommand{\ba}{\begin{array}}
\newcommand{\ea}{\end{array}}
\newcommand{\leftm}{\left[\begin{array}}
\newcommand{\rightm}{\end{array}\right]}
\newcommand{\ie}{{\it i.e. }}
\title{\LARGE \bf Multi-robot deployment from LTL specifications \\ with reduced communication \\ {\rm - technical report -}}
\author{Marius Kloetzer, Xu Chu Ding, and Calin Belta
\thanks{This work was partially supported by the CNCS-UEFISCDI grant PN-II-RU PD 333/2010 at the Technical University of Iasi, Romania, and by ONR MURI N00014-09-1051, ARO W911NF-09-1-0088, AFOSR YIP FA9550-09-1-020 and NSF CNS-0834260 at Boston University.}
\thanks{M. Kloetzer is with the Department of Automatic Control and Applied Informatics, Technical University ``Gheorghe Asachi" of Iasi, Romania
        {\tt\small kmarius@ac.tuiasi.ro}}%
\thanks{X. C. Ding and C. Belta are with the Department of Mechanical Engineering, Boston University, Boston, MA 02215, USA
        {\tt\small \{xcding,cbelta\}@bu.edu}}%
}
\begin{document}

\maketitle \thispagestyle{empty} \pagestyle{empty}

\begin{abstract}
In this work, we develop a computational framework for fully automatic
deployment of a team of unicycles from a global specification given as an LTL
formula over some regions of interest. Our hierarchical approach consists of
four steps: (i) the construction of finite abstractions for the motions of
each robot, (ii) the parallel composition of the abstractions, (iii) the
generation of a satisfying motion of the team; (iv) mapping this motion to
individual robot control and communication strategies. The main result of the
paper is an algorithm to reduce the amount of inter-robot communication
during the fourth step of the procedure.
\end{abstract}

\section{INTRODUCTION}

Motion planning and control is a fundamental problem that have
been extensively studied in the robotics literature~\cite{SML:06}.
Most of the existing works have focused on point-to-point
navigation, where a mobile robot is required to travel from an
initial to a final point or region, while avoiding obstacles.
Several solutions have been proposed for this problem, including
cell decomposition based approaches that use graph search
algorithms such as $A^*$ \cite{SR-PN:03,SML:06},  continuous
approaches involving navigation functions and potential
fields~\cite{ER-DEK:92}, and sampling-based methods such as
Rapidly-Exploring Random Trees
(RRTs)~\cite{SML-JJK:01,RT-IRM-MMT-JWR:10}.  However, the above
approaches cannot accommodate complex task specifications, where a
robot might be required to satisfy some temporal and logic
constraints, {\it e.g.,} ``avoid $E$ for all times; visit $A$ or
$B$ and then be at either $C$ or $D$ for all times".

In recent years, there has been an increased interest in using temporal
logics to specify mission plans for robots
\cite{Antoniotti95,Karaman_mu_09,KB-TAC08-LTLCon,Hadas-ICRA07,Loizou04,Quottrup04,Tok-Ufuk-Murray-CDC09}.
Temporal logics \cite{baier2008principles,Clarke99,automata-book07} are
appealing because they provide formal, rich, and high level languages in
which to describe complex missions. For example, the above task specification
translates immediately to the Linear Temporal Logic (LTL) formula
$\square\neg E \wedge \diamondsuit ((A \vee B) \wedge \diamondsuit\square (C
\vee D))$, where $\neg$, $\vee$, $\wedge$ are the usual Boolean operators,
and $\diamondsuit$, $\square$ are two temporal operators standing for
``eventually" and ``always", respectively. Computation Tree Logic (CTL) and
$\mu$-calculus have also been advocated as robot motion specification
languages \cite{Quottrup04,Karaman_mu_09}.

To use formal languages and model checking techniques for robot
motion planning and control, a fundamental challenge is to
construct finite models that accurately capture the robot motion
and control capabilities. Most current approaches are based on the
notion of abstraction \cite{Alur00}. Enabled by recent
developments in hierarchical abstractions of dynamical systems
\cite{belta2006controlling,burridge1999sequential,conner2006integrated,DOK-ICRA98,
HabColSchup06}), it is now possible to model the motions of
several types of robots as finite transition systems over a
cell-based decomposition of the environment. By using equivalence
relations such as simulations and bisimulations \cite{Milner89},
the motion planning and control problem can be reduced to a model
checking or formal synthesis problem for a finite transition
system, for which several techniques are readily available
\cite{VW86,Holzmann97,Clarke99,DiVinE}.

Some recent works suggest that such single-robot techniques can be
extended to multi-robot systems through the use of parallel
composition (synchronous products) \cite{Quottrup04,KB-TRO-2009}.
The main advantage of such a bottom-up approach is that the motion
planning problem can be solved by off-the-shelf model checking on
the parallel composition followed by canonical projection on the
individual transition systems. The two main limitations, both
caused by the parallel composition, are the state space explosion
problem and the need for inter-robot synchronization
(communication) every time a robot leaves its current region. In
our previous work, we proposed bisimulation-type techniques to
reduce the size of the synchronous product in the case when the
robots are identical \cite{KB-ICNSC-06} and derived classes of
specifications that do not require any inter-robot communication
\cite{KB-TRO-2009}. By drawing inspiration from distributed formal
synthesis \cite{mukund2002}, we have also proposed top-down
approaches that do not require the parallel composition of the
individual transition systems \cite{yushandars}. While cheaper,
this method restricts the specifications to regular expressions.

In this paper, we focus on bottom-up approaches based on parallel composition
and address one of the limitations mentioned above. Specifically, we develop
an algorithm that determines a reduced number of synchronization (communication) moments along
a satisfying run of the parallel composition. Our approach is heuristic - we
do not minimize the necessary amount of communication. However, our approach
can be directly modified to produce minimal sets of synchronization moments.
Our extensive experiments show that the proposed   algorithm leads to a significant
reduction in the number of synchronizations. We integrate this algorithm into
a software tool for automatic deployment of unicycles with polyhedral control
constraints from specifications given as LTL formulas over the regions of an
environment with a polyhedral partition. The user friendly tool, which is
freely downloadable from \url{http://hyness.bu.edu/\~software/MRRC.htm},
takes as input a user-defined environment, an LTL formula over some
polytopes, the number of unicycles, and their forward and angular velocity
constraints. It returns a control and communication strategy for each robot
in the team. While transparent to the user, the tool also implements
triangulation and polyhedral operation algorithms from
\cite{triangle-soft,cdd-soft,KB-TAC08-LTLCon}, LTL to B\"uchi conversion
\cite{Gastin01}, and robot abstraction by combining the affine vector field
computation from \cite{HabColSchup06} with input-output regulation
\cite{DOK-ICRA98}.

The remainder of the paper is organized as follows. Sec.
\ref{sec:prelim}presents some preliminary notions necessary throughout the
paper. Sec. \ref{sec:prob_form} formulates the general problem we want to
solve, outlines the solution and then presents a specific problem of
interest. The main contribution of the paper is given in Sec.
\ref{sec:find_synch_mom}. To illustrate the method and the main concepts, a
case study is examined throughout the paper and concluded in Sec.
\ref{sec:case_study_solved}, while an additional case study is included in
Sec. \ref{sec:examples}. The paper ends with conclusions and final remarks in
Sec. \ref{sec:concl}.

\section{PRELIMINARIES}\label{sec:prelim}

\begin{definition}\label{def:tran_syst}
A deterministic finite transition system is a tuple
$T=(Q,q_0,\rightarrow,\Pi,\rho)$, where $Q$ is a (finite) set of
states, $q_0\in Q$ is the initial state, $\rightarrow\subseteq
Q\times Q$ is a transition relation, $\Pi$ is a finite set of
atomic propositions (observations), and $\rho: Q\rightarrow 2^\Pi$
is the observation map.
\end{definition}

To avoid supplementary notations, we do not include control inputs in
Definition \ref{def:tran_syst} since $T$ is deterministic, \ie we can choose
any available transition at a given state. A {\em trajectory} or {\em run} of
$T$ starting from $q$ is an infinite sequence $r=r(1)r(2)r(3)\ldots$ with the
property that $r(1)=q_0$, $r(i)\in Q$, and $(r(i),r(i+1))\in\rightarrow$,
$\forall i\ge 1$. A trajectory $r=r(1)r(2)r(3)\ldots$ defines an infinite
{\em word} over set $2^\Pi$, $w=w(1)w(2)w(3)\ldots$, where $w(i)=\rho(r(i))$.
With a slight abuse of notation, we will denote by $\rho(r)$ the word
generated by run $r$. The set of all words that can be generated by $T$ is
called the ($\omega$-) language of $T$.

In this paper we consider motion specifications given as formulas of Linear
Temporal Logic (LTL) \cite{Clarke99}.  A formal definition for the syntax and
semantics of LTL formulas is beyond the scope of this paper. Informally, the
LTL formulas are recursively defined over a set of atomic propositions $\Pi$,
by using the standard boolean operators and a set of temporal operators. The
boolean operators are $\neg$ (negations), $\vee$ (disjunction), $\wedge$
(conjunction), and the temporal operators that we use are: $U$ (standing for
``until"), $\square$ (``always"), $\diamondsuit$ (``eventually"). LTL
formulas are interpreted over infinite words over set $2^\Pi$, as are those
generated by transition system $T$.  For LTL formulas satisfied by continuous
systems, we restrict the class of specifications to $\rm{LTL}_{-X}$, which
are LTL formulas without the ``next" temporal operator.  We note that, the
class of $\rm{LTL}_{-X}$ is not at all restrictive, since for continuous
systems $\rm{LTL}_{-X}$ captures the full expressivity power of LTL
\cite{KB-TAC08-LTLCon}.

All LTL formulas can be converted into a generalized B{\"u}chi automaton
\cite{Gastin01} defined below:

\begin{definition}[Generalized B{\"u}chi automaton]\label{def:buchi_generalized}
A generalized B{\"u}chi automaton is a tuple
$\mathcal{B}=(S,s_0,\Sigma,\rightarrow_\mathcal{B},F)$, where
\begin{itemize}
\item $S$ is a finite set of states, \item $S_0\subseteq S$ is the
set of initial states, \item $\Sigma$ is the input alphabet, \item
$\rightarrow_\mathcal{B}\subseteq S\times \Sigma\times S $ is a
(nondeterministic) transition relation, \item $F\subseteq 2^S$ is
the {\it set of sets} of accepting (final) states.
\end{itemize}
\end{definition}

The semantics of a B{\"u}chi automaton is defined over infinite input words
over $\Sigma$. An input word is accepted by automaton $\mathcal{B}$ if and
only if there exists a run produced by that word with the property that all
sets from $F$ are infinitely often visited. Due to the complicated acceptance
condition (multiple sets have to be infinitely often visited), a generalized
B{\"u}chi automaton is usually converted into a regular (degeneralized)
B{\"u}chi automaton. A regular B{\"u}chi automaton has only one set of final
states, {\it i.e.} $F\in 2^S$. Any generalized B{\"u}chi automaton can be
transformed into a regular B{\"u}chi automaton that accepts the exact same
words.  A conversion algorithm can be found in \cite{Gastin01}.

For any LTL formula $\phi$ over a set of atomic propositions
$\Pi$, there exists a (generalized or regular) B{\"u}chi automaton
$\mathcal{B}_\phi$ with input alphabet $\Sigma\subseteq 2^\Pi$
accepting {\it all and only} infinite words over $\Pi$
satisfying formula $\phi$ \cite{Wolper83}.

Given a transition system $T$ with set of observations $\Pi$ and an LTL
formula $\phi$ over $\Pi$, one can find a trajectory of $T$ which generates a
word satisfying $\phi$. This can be done by using model checking inspired
tools \cite{KB-TAC08-LTLCon}, which begin by translating $\phi$ to a regular
B{\"u}chi automaton $\mathcal{B}_\phi$. Then, the product of $T$ with
$\mathcal{B}_\phi$ is computed, operation that can be informally viewed as a
matching between observations of $T$ and transitions of $\mathcal{B}_\phi$.
An accepted run (if any) of the obtained product automaton is chosen. This
accepted run is projected into a run $r$ of $T$, which generates a word
satisfying $\phi$. Although $r$ is infinite, it has a finite-representable
form, namely it consists of a finite string called {\it prefix}, followed by
infinite repetitions of another finite string called {\it suffix} (such a run
is said to be in the prefix-suffix form). A cost criterion can be imposed on
the obtained run $r$, {\it e.g.} the minimum memory for storing it, or a
minimum cost on transitions of $T$ encountered when following the prefix and
a finite number of repetitions of the suffix. The run $r$ can be be directly
generated on $T$ if one can deterministically control (impose) the transition
that appears in each state (which is true for a deterministic transition
system $T$ defined in \ref{def:tran_syst}).

\section{PROBLEM FORMULATION AND APPROACH}\label{sec:prob_form}

In this paper we are interested in developing an automated framework for
deploying identical unicycle robots in planar environments. Assume that such
a robot is described by $(x,y,\theta)$, where $(x,y)\in \mathbb{R}^2$ gives
the position vector of the robot's center of rotation, and $\theta$ is its
orientation. The control $w=[v,\omega]^T\in W\subseteq \mathbb{R}^2$ consists
of forward driving ($v$) and steering ($\omega$) speeds, where $W$ is a set
capturing control bounds. We assume that $W$ includes an open ball around the
origin, which implies that $v$ can be also negative (\ie the robot can drive
backwards). The kinematics of the unicycle are given by:
\begin{equation}\label{eqn:unicycle_kinematics}
\left\{
\begin{array}{l}
\dot{x}=v\cos\theta\\
\dot{y}=v\sin\theta\\
\dot{\theta} = \omega
\end{array}
\right.
\end{equation}

Assume that some robots move in a polygonal convex environment with
kinematics given by \eqref{eqn:unicycle_kinematics}, where a set $\Pi$ of
non-overlapping convex polygonal regions\footnote{Note that convex
non-polygonal regions can be bounded by convex polygonal regions with
arbitrary accuracy, and non-convex regions can be divided into adjacent
convex regions} are defined. The general deployment problem for a team of
unicycle robots satisfying an LTL formula is given by:

\begin{problem}\label{pr:unicycles}
Given a team of $n$ unicycles and a task in the form of an $\rm{LTL}_{-X}$
formula $\phi$ over a set of regions of interest $\Pi$, design individual
communication and control strategies for the mobile robots such that the
motion of the team satisfies the specification.
\end{problem}

We assume that the unicycles are identical and have a small (negligible) size
when compared to the size of the environment and of the predefined regions.
Moreover, we consider that a unicycle visits (or avoids) a region when a
specific reference point on it visits (or avoids) that region. To fully
understand Problem \ref{pr:unicycles}, we say that the motion of the team
satisfies an LTL formula $\phi$ if the word generated during the motion
satisfies $\phi$. The word generated by a set of $n$ continuous trajectories
is a straightforward generalization of the definition of the word generated
by a single trajectory \cite{KB-TAC08-LTLCon}. Informally, the word generated
by the team motion consists of a sequence of elements of $2^\Pi$ containing
the satisfied propositions (visited regions) as time evolves. In a generated
word, there are no finite successive repetition of the same element of
$2^\Pi$, and infinite successive repetitions of the same element appear if
and only if each robot trajectory stays inside a region.

{\bf Case study}: For better understanding of introduced concepts, throughout
this paper, we consider a case study with the environment illustrated by Fig.
\ref{fig:environment}, where 3 unicycle-type mobile robots evolve. The
reference point of each unicycle is its ``nose", the center of rotation is
the middle of the rear axis, and the initial deployment of robots is the one
from Fig. \ref{fig:environment}. The imposed specification requires that
``regions $\pi_1$ and $\pi_4$ and $\pi_6$ are simultaneously visited, and
regions $\pi_2$ and $\pi_5$ are simultaneously visited, infinitely often,
while region $\pi_3$ is always avoided". This specification translates to the
following LTL formula:
\begin{equation}\label{eqn:specification}
\phi= \square\neg\pi_3 \wedge \square\diamondsuit\left( (\pi_1
\wedge \pi_4 \wedge \pi_6) \wedge \diamondsuit (\pi_2 \wedge
\pi_5) \right)
\end{equation}
\endproof

\begin{figure}
\center
\includegraphics[width=0.7\columnwidth]{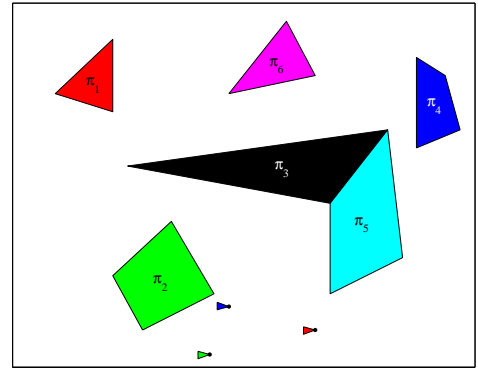}
\caption{A polygonal environment, six regions of interest, and the initial
deployment of three unicycle robots. Robot 1 is green, robot 2 is blue, and
robot 3 is red; there is no relation between the robot and the region
colors.} \label{fig:environment}
\end{figure}

To provide a deployment strategy for Problem \ref{pr:unicycles}, we will
first combine various techniques from computational geometry, motion planning
and model checking until we obtain a solution in the form of a sequence of
tuples of smaller regions and feedback control laws in each of these regions.
After this, we focus on the main contribution of the paper, namely finding a
reduced set of communication (synchronization) moments among robots, while
still guaranteeing the satisfaction of the specification. The main steps of
the algorithmic approach for solving Problem \ref{pr:unicycles} are given in
the following 3 subsections.

\subsection{Robot Abstraction}\label{sec:rob_abstraction}

We first abstract the motion capabilities of each robot to a finite
transition system. To this end, the environment is first partitioned into
convex regions (cells) such that two adjacent cells exactly share a facet,
and each region from $\Pi$ consists of a set of adjacent cells. Such a
partition can be constructed by employing cell decomposition algorithms used
in motion planning and computational geometry, {\it e.g.} one can use a
constraint triangulation \cite{HabColSchup06} or a polytopal partition
\cite{KB-TAC08-LTLCon}. Let us denote the set of partition elements by
$C=\{c_1,c_2,\ldots ,c_{|C|}\}$. For a clear understanding, Fig.
\ref{fig:partition} presents a triangular partition obtained for the
environment from Fig. \ref{fig:environment}.  We use a triangular partition
for the case study presented throughout this paper, although our approach can
be applied to any partition scheme.

\begin{figure}
\center
\includegraphics[width=0.7\columnwidth]{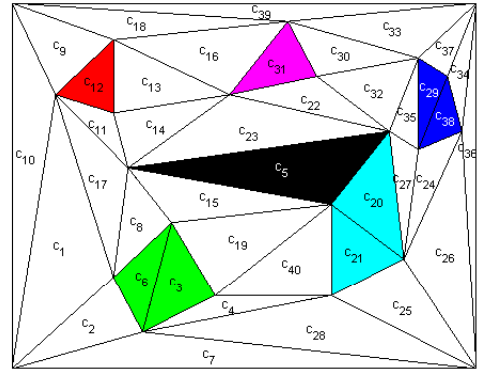}
\caption{Triangular partition consisting of 40 regions,
corresponding to the environment from Fig.
\ref{fig:environment}.} \label{fig:partition}
\end{figure}

The second step is to reduce each unicycle with kinematics
(\ref{eqn:unicycle_kinematics}) to a fully-actuated point robot placed in
unicycle's reference point. We use the approach from \cite{DOK-ICRA98}, where
a non-singular map relates the velocity $u$ of the reference point to the
initial controls $w=[v,\omega]^T$. Note that $u$ can be conservatively
bounded by a polyhedral set $U$, such that the resulted control $w$ is in
$W$.

\begin{definition}\label{def:tr_sys_rob_abstr}
The transition system abstracting the motion capabilities of unicycle $i$,
$i=1,\ldots ,n$ has the form
$T_i=(Q_i,q_{0i},\rightarrow_i,\Pi\cup\{\emptyset\},\rho)$, where:
\begin{itemize}
\item $Q_i=C$, {\it i.e.} the set of states is given by the cells from
    partition; \item The initial state $q_{0i}\in C$ is the cell where
    the reference point of unicycle $i$ is initially deployed;
\item The transition relation $\rightarrow_i\in C\times C$ is created as
    follows:
\begin{itemize} \item $(c_i,c_i)\in
\rightarrow_i$ if we can design a feedback control law making cell
$c_i$ invariant with respect to the trajectories of the reference
point of unicycle $i$, and \item $(c_i,c_j)\in \rightarrow_i$, $i\neq
j$ if $c_i$ and $c_j$ are adjacent and we can design a feedback
control law such that the reference point of unicycle $i$ leaves cell
$c_i$ in finite time, by crossing the common facet of $c_i$ and
$c_j$;
\end{itemize}
\item $\Pi$ labels the set of regions of interest, and symbol $\emptyset$
    corresponds to the space not covered by any region of interest; \item
    The observation map $\rho$ associates each cell from the partition
    with the corresponding proposition from $\Pi$, or with the symbol
    $\emptyset$.
\end{itemize}
\end{definition}

Considering the unicycles reduced to their reference point, the construction
of the continuous controllers corresponding to the transition relation from
Definition \ref{def:tr_sys_rob_abstr} is done by using results for facet
reachability and invariance in polytopes \cite{HabColSchup06}. We just
mention that designing such feedback control laws reduces to solving a set of
linear programming problems in every cell from partition, where the
constraints result from the control bounds $U$ and the considered adjacent
cells. Also, since the reference point is fully-actuated and the control
bounds $U$ include the origin, we obtain a transition between every adjacent
cells, as well as a self-loop in every state of $T_i$. Therefore, a run of
$T_i$ can be implemented by unicycle $i$ by imposing specific control laws
for the reference point in the visited cells, and by mapping these controls
to $w$. We note that, since the unicycles are identical, the only difference
between transition systems $T_i$ is given by their initial states.

{\bf Case study revisited:} The partition from Fig. \ref{fig:partition}
enables us to construct a transition system with 40 states corresponding to
each robot, where the transitions are based on adjacency relation between
cells from environment, and the observations are given by the satisfied
region. Fig. \ref{fig:vector_field_tr_sys} illustrates some vector fields
obtained from driving-to-facet control problems and from invariance
controlled design, as well as the corresponding transitions from system
$T_i$.
\endproof

\begin{figure}
\center
\includegraphics[width=0.6\columnwidth]{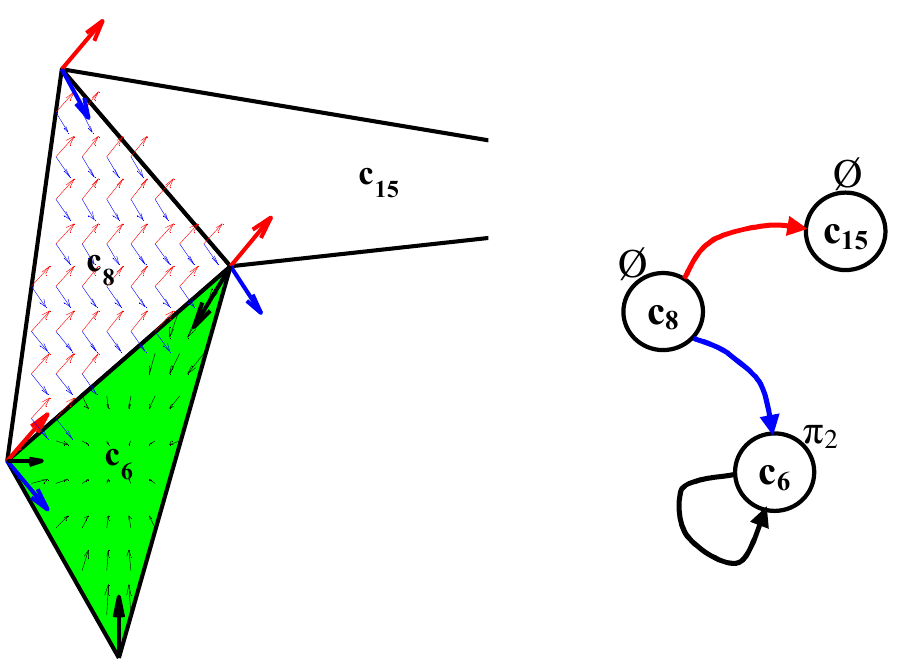}
\caption{Left: vector fields driving any initial state from cell
$c_8$ to a neighbor (blue and red colors), and vector field
making $c_6$ invariant (black). Right: transitions from $T_i$ are colored in accordance with the colors of the
created vector fields. Observation given by map $\rho$ is placed
near each state.} \label{fig:vector_field_tr_sys}
\end{figure}

\subsection{Satisfying Behavior of the Team}\label{sec:team_run}

In this part of the solution, we use ideas from
\cite{KB-ICNSC-06,KB-TRO-2009} to find a run (for the whole team) satisfying
the formula $\phi$. The $n$ transition systems $T_i$ are combined into a
global one, $T_G$, capturing the synchronized motion of the team
(synchronized in the sense that robots change at the same time the occupied
cells from partition, or states from $T_i$). Then, by using model checking
inspired techniques as mentioned in Sec. \ref{sec:prelim}, we find a run for
the whole team, in the form of a prefix-suffix sequence of tuples.

\begin{definition}\label{def:global_tr_sys}
The transition system $T_G=(Q_G,q_{G0},\rightarrow_G,\Pi,\rho_G)$ capturing
the behavior of the group of $n$ robots is defined as the synchronous product
of all $T_i$'s, $i=1,\ldots ,n$:
\begin{itemize}
\item $Q_G = Q_1\times\ldots\times Q_n$, \item
    $q_{G0}=(q_{01},\ldots,q_{0n})$, \item $\rightarrow_G\subset
    Q_G\times Q_G$ is defined by
    $\left((q_1,\ldots,q_n),(q'_1,\ldots,q'_n)\right)\in \rightarrow_G$
    if and only if $(q_i,q'_i)\in\rightarrow_i$, $i=1,\ldots,n$, \item
    $\Pi$ is the observation set, \item $\rho_G:Q_G\rightarrow 2^\Pi$ is
    defined by $\rho_G\left((q_1,\ldots,q_n)\right)=\cup_{i=1}^n
    \{\rho(q_i)\}$.
\end{itemize}
\end{definition}

We now find a run $R$ of $T_G$ such that the generated word $\rho_G(R)$
satisfies $\phi$. For this, we use the tool from \cite{KB-TAC08-LTLCon}, and
we impose the optimality criterion that during the prefix and one iteration of
the suffix, the total number of movements between partition cells is
minimized. This is accomplished by adding weights to transitions of $T_G$,
where the weight of a transition equals the number of robots that change
their state from $T_i$. These weight are inherited when taking the product of
$T_G$ with $B_\phi$.   This optimality criterion minimizes the memory used on robots for storing motion controllers
(feedback control laws driving robots from one cell to an adjacent one).

In \cite{KB-ICNSC-06,KB-TRO-2009}, such a global run was projected to
individual runs of robots. Then, the robots can
be controlled by the affine feedback controllers that map to
unicycle controls from Sec. \ref{sec:rob_abstraction}. In deployment, when the team makes a transition from one tuple of the run to the next, the robots must synchronize (communicate) with each other and wait until every member finishes the previous transition. The synchronization will occur on the boundaries of the cells when crossing from a cell to another.

\begin{remark}\label{rem:reduce_T_G}
Since the robots are identical, the number of states from $T_G$ can be reduced by designing a bisimilar
(equivalent) transition system (the quotient induced by robot permutations)
\cite{KB-ICNSC-06}. Thus, the computation complexity of finding a run $R$ is
manageable even for large teams.
\end{remark}

{\bf Case study revisited:} For the above introduced example (Fig.
\ref{fig:environment} and \ref{fig:partition}, and specification
(\ref{eqn:specification})), we obtain a team run $R = \textrm{prefix, suffix, suffix},
\ldots$, with 7 states in prefix and 8 states in suffix, as shown in Eqn.
(\ref{eqn:run_team_example}).
\begin{equation}\label{eqn:run_team_example}
\begin{split}
\textrm{prefix} = &\left( \begin{array}{c} c_{7} \\ c_{4} \\ c_{28}
\end{array} \right)\,
\left( \begin{array}{c} c_{2} \\ c_{3} \\ c_{28}
\end{array} \right)\,
\left( \begin{array}{c} c_{1} \\ c_{6} \\ c_{28}
\end{array} \right)\,
\left( \begin{array}{c} c_{10} \\ c_{8} \\ c_{28}
\end{array} \right)\,\\
&\left( \begin{array}{c} c_{9} \\ c_{17} \\ c_{25}
\end{array} \right)\,
\left( \begin{array}{c} c_{18} \\ c_{11} \\ c_{26}
\end{array} \right)\,
\left( \begin{array}{c} c_{16} \\ c_{11} \\ c_{24}
\end{array} \right)\,\,\,\\
\textrm{suffix} = &\left( \begin{array}{c} c_{31} \\ c_{12}
\\ c_{38} \end{array} \right)\,
\left( \begin{array}{c} c_{31} \\ c_{11} \\ c_{38}
\end{array} \right)\,
\left( \begin{array}{c} c_{31} \\ c_{17} \\ c_{24}
\end{array} \right)\,
\left( \begin{array}{c} c_{31} \\ c_{8} \\ c_{27}
\end{array} \right)\,\\
&\left( \begin{array}{c} c_{31} \\ c_{6} \\ c_{20}
\end{array} \right)\,
\left( \begin{array}{c} c_{31} \\ c_{8} \\ c_{27}
\end{array} \right)\,
\left( \begin{array}{c} c_{31} \\ c_{17} \\ c_{27}
\end{array} \right)\,
\left( \begin{array}{c} c_{31} \\ c_{11} \\ c_{24}
\end{array} \right)
\end{split}
\end{equation}

Run $R$ means that the robots start from their initial cells, and the first
to robots evolve to cells $c_2$ and $c_3$ respectively. The synchronization
imposed by the construction of $T_G$ and used in
\cite{KB-ICNSC-06,KB-TRO-2009} would require that the first two robots cross
from $c_7$ to $c_2$ and from $c_4$ to $c_3$ synchronously (at exactly the
same time), and so on. By construction, the word generated by $R$ over the
set of propositions $\Pi$, $\rho_G(R)$, satisfies formula $\phi$ from Eqn.
(\ref{eqn:specification}). However, the mentioned synchronizations are
disadvantageous, because they require a lot of communication and waiting, and
they imply many discontinuities is the control input of each robot (due to
the frequent stops at the facets of traversed cell). Intuitively, one can
observe that synchronizations along prefix of $R$ for example do not
contribute to the satisfaction of the formula, nor they would lead to its
violation, because the robots just head towards some regions they have to
visit.\endproof

As evident from the case study, deployment strategies for a satisfying run
$R$ as proposed in \cite{KB-ICNSC-06,KB-TRO-2009} require a significant
amount of synchronization (and thus communication) among the team. In this
paper, rather than synchronizing robots for each transition in $R$, we aim in
finding a reduced set of transitions along $R$ that require synchronization.
This problem is formalized in Sec. \ref{sec:minim_comunic_prob}. Its solution
is given in Sec. \ref{sec:find_synch_mom}, in the form of an algorithmic
procedure returning a reduced number of necessary synchronization moments
along $R$, together with deployment strategies for robots. We note that in
\cite{KB-TRO-2009}, we developed an algorithmic tool that tests if
unsynchronized motion of the team can lead to a violation of the formula.  If
the answer is yes (such is the case for the case study considered in this
paper), then synchronization required in \cite{KB-TRO-2009} cannot be
reduced.  Therefore, the solution to the problem formulated in Sec.
\ref{sec:minim_comunic_prob} constitutes a significant improvement over
\cite{KB-TRO-2009}.

\subsection{Minimizing Communication}\label{sec:minim_comunic_prob}

\begin{problem}\label{pr:synchronization}
Given a run $R$ of $T_G$ that satisfies the LTL specification $\phi$, find a
team control and communication strategy that requires a reduced number of
inter-robot synchronizations than in the synchronization-based deployment,
while at the same time guaranteeing that the produced motion of the team
satisfies $\phi$.
\end{problem}

Central to our approach to Problem \ref{pr:synchronization} is an algorithm
that takes as input the satisfying run $R=R(1)R(2)R(3)\ldots$ and returns a
reduced number of necessary {\it synchronization moments}, where the $i^{\rm
th}$ moment along R is defined as the index	 $i$ corresponding to $R(i)$.
Motivated by the fact that synchronization by stopping and waiting at region
boundaries is not always necessary to produce a desired tuple, we consider
two types of synchronizations: in a {\it weak} synchronization, a certain
tuple is generated because there exists an instant of time at which the
robots are in the corresponding cells; a {\it strong} synchronization ensures
that a sequence of two successive tuples from $R$ is observed.  Note that a
strong synchronization at each moment in $R$ is exactly the stop and wait
strategy from \cite{KB-ICNSC-06,KB-TRO-2009}. Since the run is given in the
prefix-suffix form, our algorithmic framework will return a finite set of
moments that require synchronization, as well as the type for each
synchronization moment.

\section{SOLUTION TO PROBLEM \ref{pr:synchronization}}\label{sec:find_synch_mom}

This section provides a solution to Problem \ref{pr:synchronization}. We
first construct an algorithmic procedure for finding a set of necessary
synchronization moments (Sec. \ref{sec:algorithm_synch_mom} -
\ref{sec:proof}), and then we present a communication strategy guaranteeing
that the synchronization moments are satisfied (Sec.
\ref{sec:communic_strategy}).

Without loss of generality, we assume that run $R=R(1)R(2)R(3)\ldots$ is in
the prefix-suffix form (see Sec. \ref{sec:prelim}). Assume that the prefix
has length $k-1$ and the suffix has length $l-k+1$, with $R(j)=\left(c^1_j,
c^2_j, \ldots , c^n_j\right)^T$, for $j=1,\ldots ,l$ (and $R(l+1)=R(k)$,
$R(l+2)=R(k+1)$, $\ldots$). For avoiding supplementary notations implying
repetitions of the suffix, we assume that whenever an index along $R$ exceeds
$l$, that index is automatically mapped to the set $\{k,\ldots ,l\}$, {\it
i.e.} if $j=l$, then index $j+1$ is replaced with $k$, and so on. For an
easier understanding, we use superscripts for identifying a robot and
subscripts for indexing the cells.

For any robot $i=1,\ldots ,n$ and for any moment $j=1,\ldots ,l-1$, cells
$c^i_{j}$ and $c^i_{j+1}$ are either adjacent or identical, and the same is
true for cells $c^i_{l}$ and $c^i_{k}$. Recall that from the abstraction
process of continuous robot trajectories, $R$ does not contain any successive
and finite repetition of a $n$-tuple.

\subsection{Finding Synchronization Moments}\label{sec:algorithm_synch_mom}

The idea of constructing a solution to Problem
\ref{pr:synchronization} is to start with no synchronization
moments, and iteratively test if the formula can be violated and
update the set of synchronization moments and their type.

Let $S\subseteq\{1,\ldots ,l\}$ be an arbitrary set of synchronization
moments, and let us impose the type of each synchronization moment by
creating a map $\tau:S\rightarrow \{weak,strong\}$, where $\tau(j)=weak$
means a {\it weak} synchronization at position $j$, and $\tau(j)=strong$
means a {\it strong} synchronization at position $j$, $\forall j\in S$.

As mentioned in Sec. \ref{sec:minim_comunic_prob}, a {\it weak}
synchronization at moment $j$ along run $R$ means that the tuple $R(j)$ is
reached by the robots, {\it i.e.,} there is a moment when the robots are in
cells $c^1_j,\, c^2_j, \ldots, c^n_j$, respectively. A {\it strong}
synchronization at moment $j$ along run $R$ means that there is a weak
synchronization at $j$, and additionally the robots synchronously enter the
next tuple ($R(j+1)$). In other words, all moving robots $i$ cross from cells
$c^i_j$ to cells $c^i_{j+1}$ at the same time.

One can observe that a strong synchronization at position $j$ is not
equivalent to two weak synchronizations at $j$ and $j+1$. The strong
synchronization guarantees that in the generated team run the tuple $R(j)$ is
immediately followed by the tuple $R(j+1)$. However, the weak moments
guarantee that the $R(j)$ and $R(j+1)$ tuples are generated, but there may
appear different tuples between them. It can be noted that a weak
synchronization at moment $j$ is equivalent with a strong one at the same
moment if and only if $j=k$ and the suffix of $R$ has length 1 ($k=l$).

For testing the correctness of a set of synchronization moments, we developed
a procedure $test\_feasibility_\phi(R,S,\tau)$, which takes as inputs the
formula $\phi$, the run $R$, a set $S$ of synchronization moments and a map
$\tau$. The returned output is either ``feasible" (set $S$ with map $\tau$
guarantees the satisfaction of the formula, no matter how the robots move in
between synchronization moments) or ``not feasible" (it is possible to
violate the formula by imposing just the moments from $S$ with type $\tau$).
We postpone the details on $test\_feasibility_\phi(R,S,\tau)$ until Sec.
\ref{sec:test_feasib}.

We use Algorithm \ref{alg:synch_moments} for obtaining a solution to Problem
\ref{pr:synchronization}, in the form of a set $S$ of synchronization moments
and a map $\tau$. The intuition behind this algorithm is to start with no
synchronization moment ($S=\emptyset$) and increase $S$ until we obtain a
feasible set together with a corresponding map $\tau$. We show the
correctness of the solution given by Algorithm \ref{alg:synch_moments} in
Sec. \ref{sec:proof}, together with more informal explanations on the
provided pseudo-code.

\begin{algorithm}
\caption{Solution to Problem \ref{pr:synchronization}}
{\bf Inputs:} Run $R$, formula $\phi$\\
{\bf Outputs:} Set $S$, map $\tau$
\begin{algorithmic}[1]\label{alg:synch_moments}
\FOR{$synch\_type\in\{weak,strong\}$}
\STATE $S=\emptyset$, $\tau$ undefined
\STATE $lower_{bound}=1$, $moment=l$
\WHILE{$moment\geq lower_{bound}$}
    \IF{$test\_feasibility_\phi(R,S,\tau)$ = ``feasible"}
        \STATE {\bf Return} set $S$ and map $\tau$
    \ENDIF
    \STATE $S_{temp}=S\cup \{moment, moment+1,\ldots , l\}$
    \STATE $\tau_{temp}(i)=\tau(i),\, \forall i\in S$
    \STATE $\tau_{temp}(i)=synch\_type,\, \forall i\in \{moment+1,\ldots , l\}$
    \FOR{$\tau_{temp}(moment)\in\{weak,strong\}$}
        \IF{$test\_feasibility_\phi(R,S_{temp},\tau_{temp})$ = ``feasible"}
            \STATE $S:=S\cup \{moment\}$
            \STATE $\tau(moment)=\tau_{temp}(moment)$
            \STATE $lower_{bound}=moment$
            \STATE $moment=l$
            \STATE {\bf Break} ``for" loop on $\tau_{temp}$
        \ELSE
            \STATE $moment:=moment-1$
        \ENDIF
    \ENDFOR
\ENDWHILE
\ENDFOR
\end{algorithmic}
\end{algorithm}

\begin{remark}[Complexity]
Algorithm \ref{alg:synch_moments} is guaranteed to finish, because in the
worst case it returns the set $S=\{1,\ldots ,l\}$, meaning that strong
synchronizations are needed at every moment (see Sec. \ref{sec:proof}).  The
worst case complexity requires $3l(l+3)/2$ iterations of the {\it
test\_feasibility} procedure.
\end{remark}

\begin{remark}[Optimality]
Algorithm \ref{alg:synch_moments} can be tailored such that it returns an
optimal solution (with respect to a cost defined by weighting different
synchronization moments). This can be done by first constructing all possible
pairs $S,\tau$ (there are $3^l$ such pairs). Then, these pairs should be
ordered according to their associated cost. Finally, the pairs should be
tested (in the found order) against the {\it test\_feasibility} procedure,
until a feasible response is obtained. Of course, the worst case would
require $3^l$ iterations of {\it test\_feasibility} (when the only solution
is $S=\{1,\ldots ,l\}$ and strong
synchronization at every moment).
\end{remark}

\subsection{Testing a Set of Synchronization Moments}\label{sec:test_feasib}

Procedure $test\_feasibility_\phi(R,S,\tau)$ follows several main steps:

\begin{enumerate}[(i)]
\item It produces an automaton $A_{R,S,\tau}$ generating all the
infinite words (sequences of observed propositions) that can
result while the robots evolve and obey synchronization moments
from $S$ (this automaton has the form of a B\"{u}chi automaton
with an observation map); \item If necessary, $A_{R,S,\tau}$ is
transformed into a standard (degeneralized) form; \item The
product automaton between $A_{R,S,\tau}$ and the B\"{u}chi
corresponding to negated LTL formula ($\mathcal{B}_{\neg\phi}$) is
computed and its language is checked for emptiness; \item If the
language is empty, the procedure returns ``feasible" and otherwise
it returns ``not feasible".
\end{enumerate}

For step (i), the run $R$ is projected to $n$ individual runs, each
corresponding to a specific robot. In each of these individual runs, we
collapse the finite successive repetitions of identical states (cells) into a
single occurrence (such repetitions mean that the individual robot stays
inside a cell). Let us denote the resulted runs by $R^i = q_1^i\,q_2^i\ldots
\left[q_{k_i}^i\ldots q_{l_i}^i\right]\ldots$, where prefix has length
$k_i-1$ ($k_i \leq k$) and suffix has length $l_i-k_i+1$ ($l_i \leq l$),
$i=1,\ldots ,n$. Together with individual projections and collapsing, we
construct a set of maps $\beta_i : \{1,2,\ldots ,l\} \rightarrow \{1,2,\ldots
,l_i\}$, $i=1,\ldots ,n$, mapping each index from run $R$ to the
corresponding index from the individual run $R_i$ ({\it e.g.}
$\beta_i(j)=\beta_i(j+1)$ if we have the same $i^{\rm th}$ element in tuples
$R(j)$ and $R(j+1)$).

Next, we obtain a generalized B{\"u}chi automaton $A_{R,S,\tau}$ (see Def.
\ref{def:buchi_generalized}) whose runs are the possible sequences of tuples
of cells visited during the team evolution. Thus, the language of
$A_{R,S,\tau}$ contains all possible sequences of elements from $2^\Pi$
observed during the team movement. Each robot moves without synchronizing
with the others, except for the moments from set $S$ with type $\tau$.

\begin{definition}
The automaton $A_{R,S,\tau}$ is defined as
$A_{R,S,\tau}=(Q_A,q_{A_0},\rightarrow_A,F_A,\Pi,\rho)$, where:
\begin{itemize}
\item $Q_A = \{q_1^1,q_2^1,\ldots ,q_{l_1}^1\} \times
    \{q_1^2,q_2^2,\ldots ,q_{l_2}^2\}\times \ldots \times
    \{q_1^n,q_2^n,\ldots ,q_{l_n}^n\}$ is the set of states,
    \item $q_{A_0} = (q_1^1, q_1^2,\ldots ,q_1^n)$ is the initial state,
    \item $\rightarrow_A: Q_A\rightarrow 2^{Q_A}$ is the transition
    relation,
    \item $F_A\subset 2^{Q_A}$ is the set of sets of final states, \item
        $\Pi$ is the observation set, \item $\rho_A : Q_A\rightarrow
        2^\Pi$ is the observation map, $\rho_A(q_1,q_2,\ldots
        ,q_n)=\cup_{i=1}^n\{\rho(q_i)\}$.
\end{itemize}
\end{definition}

The {\bf \it transition relation} $\rightarrow_A$ is defined as follows:
$\forall q,q' \in Q_A$, with $q=(q_{j_1}^1,q_{j_2}^2,\ldots ,q_{j_n}^n)$ and
$q'=({q'}_{j_1}^1,{q'}_{j_2}^2,\ldots ,{q'}_{j_n}^n)$,
$\left(q,q'\right)\in\rightarrow_A$ if and only if the following rules are
simultaneously satisfied:
\begin{enumerate}[(a)]
\item $q=q'$ if and only if $j_i=k_i$ and $k_i=l_i$, $i=1,\ldots
,n$; \item ${q'}_j^i \in \{q_j^i,q_{j+1}^i\}$ if $j\in\{1,2,\ldots
,l_i-1\}$, and ${q'}_j^i \in \{q_{l_i}^i,q_{k_i}^i\}$ if $j=l_i$,
$i=1,2,\ldots ,n$; \item if $\exists s\in S$ such that
$j_i=\beta_i(s)$ for $i\in I\subseteq\{1,\ldots ,n\}$, where $I$
is the largest possible subset of robots satisfying this
requirement, then:
\begin{enumerate}[(1)]
\item if $I\neq\{1,\ldots ,n\}$, then ${q'}_{j_i}^i = q_{j_i}^i$,
    $\forall i\in I$; \item if $I = \{1,\ldots ,n\}$ and
    $\tau(s)=strong$, then ${q'}_{j_i}^i = q_{\beta_i(s+1)}^i$,
    $\forall i\in I$.
\end{enumerate}
\end{enumerate}

Informally, requirements (a) and (b) capture a global progress/movement along
individual runs of robots, by also capturing the possible situations when
some robots advance ``slower" (from the point of view that it takes more time
for them to reach the next cell from partition). Requirement (c) restricts
transitions by assuming that the agents satisfy all synchronization moments
from $S$ with type given by $\tau$.

Before detailing the construction of $F_A$, we say that we
consider as a {\bf \it generated word} of $A_{R,S,\tau}$ any
trajectory that infinitely often visits all sets of states from
$F_A$. This definition of generated words is exactly the
definition of accepting words of generalized B\"{u}chi automata.
Moreover, $A_{R,S,\tau}$ has a structure similar to a B\"{u}chi
automaton, which has final sets of states that are infinitely
often encountered along an accepted run.

The {\bf \it set of sets of final states} of $A_{R,S,\tau}$ ($F_A$) is
created by using Algorithm \ref{alg:final_sets}. The construction from
Algorithm \ref{alg:final_sets} matches the purpose of generated runs of
$A_{R,S,\tau}$, in the sense that any run contains infinitely many revisits
to tuples from suffix of $R$ where synchronization is imposed. More details
on the construction of $A_{R,S,\tau}$ are given in Sec. \ref{sec:proof}.

\begin{algorithm}
\caption{Set of final sets of automaton $A_{R,S,\tau}$}
\begin{algorithmic}[1]\label{alg:final_sets}
\STATE $S_{suffix}=S\cap\{k,\ldots ,l\}$ \IF{$S_{suffix} = \emptyset$}
    \STATE $F_A=\{q_{k_1}^1,\ldots ,q_{l_1}^1\} \times \{q_{k_2}^2,\ldots ,q_{l_2}^2\} \times
    \ldots \times \{q_{k_n}^n,\ldots ,q_{l_n}^n\}$
\ELSE
    \STATE Assume $S_{suffix}=\{s_1,s_2,\ldots ,s_{|S_{suffix}|}\}$
    \STATE $F_A=\{F_1,F_2,\ldots , F_{|S_{suffix}|}\}$
    \FOR{$j=1,2,\ldots ,|S_{suffix}|$}
        \STATE $F_j=\{q_{\beta_1(s_j)}^1,q_{\beta_2(s_j)}^2,\ldots ,q_{\beta_n(s_j)}^n\}$
    \ENDFOR
\ENDIF
\end{algorithmic}
\end{algorithm}

Once $A_{R,S,\tau}$ is constructed, we have to check if there exists a
generated word of $A_{R,S,\tau}$ that violates the LTL formula (by satisfying
the negation of the formula). As mentioned at the beginning of this section,
this basically implies checking for emptiness the language of a product
between $A_{R,S,\tau}$ and $\mathcal{B}_{\neg\phi}$ (steps (ii)-(iv)).

Similar to finding a run as mentioned in Sec. \ref{sec:prelim}, this
emptiness checking can be done by using available software tools for normal
(degeneralized) B\"{u}chi automata. In case that $F_A$ contains more than one
set, $A_{R,S,\tau}$ has the structure of a generalized B\"{u}chi automata. If
this is the case, we first convert $A_{R,S,\tau}$ into a degeneralized form
(as mentioned in Sec. \ref{sec:prelim}), and then we construct the product
with $\mathcal{B}_{\neg\phi}$. The construction of this product is similar to
the one constructed between transition systems and B\"{u}chi automata
\cite{KB-TAC08-LTLCon}. The only difference is that the set of final states
of product equals the cartesian product between final states of
(degeneralized) $A_{R,S,\tau}$ and the final states of B\"{u}chi.

{\bf Example of constructing $A_{R,S,\tau}$:} We include here a simple
example, solely for the purpose of understanding the construction of
$A_{R,S,\tau}$. Therefore, we do not define an environment, nor we impose an
LTL formula. Consider a team of 2 robots and the following run $R$:
\begin{equation}\label{eqn:example_autom}
R = \left( \begin{array}{c} c_5 \\ c_6 \end{array} \right)\,
\left[ \left(\begin{array}{c} c_1 \\ c_2 \end{array} \right)\,
\left(\begin{array}{c} c_7 \\ c_2 \end{array} \right)\, \left(
\begin{array}{c} c_3 \\ c_4 \end{array} \right)\, \right]\,\ldots
\end{equation}

$R$ has a prefix of length 1, and a suffix of length 3 (the suffix is
represented between square brackets).

First, assume an empty set of synchronization moments, $S=\emptyset$. By
projecting $R$ to individual runs and collapsing successive identical states,
we obtain: $R^1=q_1^1 \left[q_2^1 q_3^1 q_4^1\right]$ and $R^2=q_1^2
\left[q_2^2 q_3^2\right]$, where $q_1^1=c_5$, $q_2^1=c_1$, $q_3^1=c_7$,
$q_4^1=c_3$, $q_1^2=c_6$, $q_2^2=c_2$, $q_3^2=c_4$. The obtained automaton
$A_{R,\emptyset,\emptyset}$ is given in Fig. \ref{fig:example_autom}(a),
where the initial state is $(q_1^1,q_1^2)$. This automaton is already in
degeneralized form (it has a single set of final states), because $S$ does
not contain synchronization moments along suffix. Also, note that once the
final set of $A_{R,\emptyset,\emptyset}$ is reached, it is never left. This
corresponds to the fact that both robots reach and follow their suffixes
(independently), and any possible observed sequence during the movement
corresponds to a word generated by $A_{R,\emptyset,\emptyset}$.

Now, assume a set $S=\{2,4\}$, with $\tau(2)=strong$ and $\tau(4)=weak$. This
means there is a strong synchronization at the beginning of every iteration
of suffix of $R$ (state $(c_1,c_2)$) and a weak synchronization at the end
(state $(c_3,c_4)$). The automaton $A_{R,S,\tau}$ created as described in
this subsection is given in Fig. \ref{fig:example_autom}(b). This automaton
has the same set of states and observations as $A_{R,\emptyset,\emptyset}$,
but the set of transitions is reduced because of synchronization rules.
$A_{R,S,\tau}$ is in generalized form, because it has 2 sets of final states
($F_1=(q_2^1,q_2^2)$ and $F_2=(q_4^1,q_3^2)$). They gray states become
unreachable because the reduced transitions of $A_{R,S,\tau}$. Note that any
word generated by $A_{R,S,\tau}$ infinitely often visits state
$(q_2^1,q_2^2)$ (which corresponds to the first synchronization moment from
$S$) and state $(q_4^1,q_3^2)$ (corresponding to the second synchronization
moment from $S$). Also, there is only one outgoing transition from
$(q_2^1,q_2^2)$, in accordance
 with the strong synchronization in this state. It
should be clear why in this case we have 2 sets of final states: if we had a
single set, containing both $(q_2^1,q_2^2)$ and $(q_4^1,q_3^2)$, then
$A_{R,S,\tau}$ would have been accepting words like $(q_1^1,q_1^2)\,
\left[(q_2^1,q_2^2)\, (q_3^1,q_2^2)\, (q_4^1,q_2^2)\right]\ldots$. However,
such a word would correspond to a spurious (impossible) movement of robots,
because state $(q_4^1,q_3^2)$ would never be visited, although it corresponds
to a synchronization moment. \endproof

\begin{figure}
   \center
   \begin{tabular}{cc}
         \includegraphics[width=0.45\columnwidth]{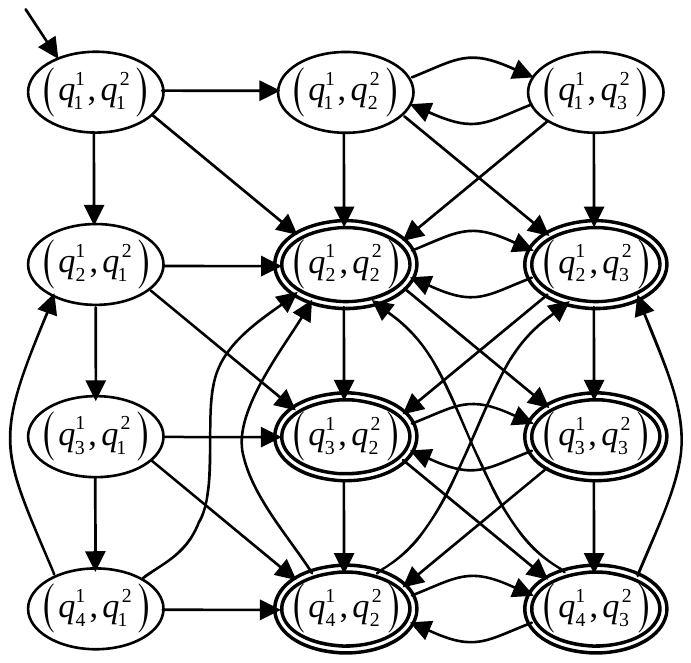} &
         \includegraphics[width=0.45\columnwidth]{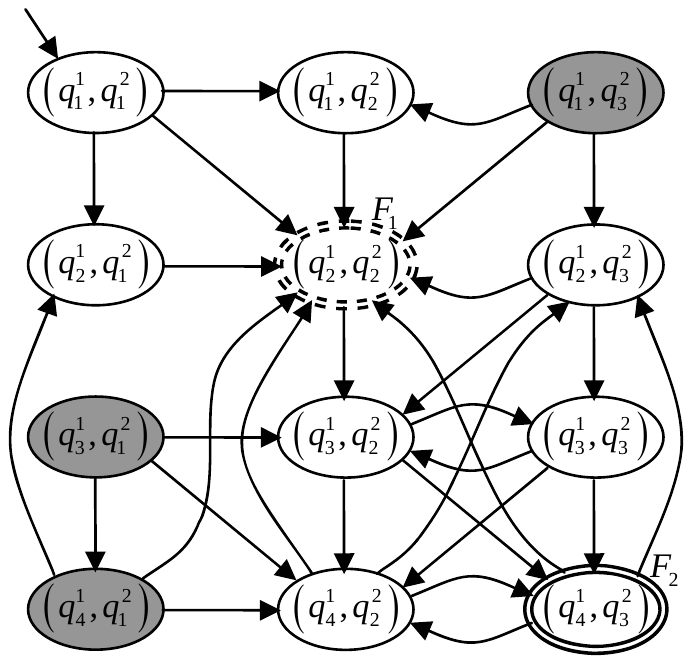}\\
         (a) & (b)\\
   \end{tabular}
\caption{Examples of two automata constructed as described in Sec.
\ref{sec:test_feasib}, from run $R$ from Eqn. (\ref{eqn:example_autom}):
(a) $A_{R,\emptyset,\emptyset}$; (b) $A_{R,\{2,4\},\tau}$, with
$\tau(2)=strong$ and $\tau(4)=weak$. Final states are double encircled, with
different line types corresponding to different final sets, and the gray
states are not reachable.} \label{fig:example_autom}
\end{figure}

\subsection{Correctness of Solution to Problem \ref{pr:synchronization}}\label{sec:proof}

\begin{theorem}\label{th:corectness_solution}
Any solution returned by Algorithm \ref{alg:synch_moments} is a feasible
solution to Problem \ref{pr:synchronization}, and Algorithm
\ref{alg:synch_moments} returns a solution in a finite number of steps.
\end{theorem}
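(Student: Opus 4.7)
The plan is to split Theorem~\ref{th:corectness_solution} into two assertions---correctness of every returned pair $(S,\tau)$ and finite-time termination of Algorithm~\ref{alg:synch_moments}---each of which reduces to understanding the role of the automaton $A_{R,S,\tau}$ built in Sec.~\ref{sec:test_feasib}.

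\textbf{Correctness.} First I would establish that $A_{R,S,\tau}$ is sound and complete with respect to the set of infinite observation sequences that an asynchronously evolving team, constrained to obey the moments $S$ with types $\tau$, can produce. Rules (a) and (b) in the definition of $\rightarrow_A$ allow each robot to either idle or advance by one cell independently of the others, modelling arbitrary relative progress rates; rule (c) enforces that, whenever a robot reaches its projection of an index $s\in S$, it blocks until every other robot also reaches the projection of $s$, and (if $\tau(s)=strong$) the whole team then moves jointly to $R(s+1)$. Algorithm~\ref{alg:final_sets} chooses the generalized-B\"uchi acceptance condition so that accepted runs revisit each suffix-synchronization tuple infinitely often (or, when $S_{suffix}=\emptyset$, traverse the individual suffixes infinitely often), faithfully reflecting the infinite repetition of the suffix of $R$. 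With soundness and completeness in hand, the emptiness check on $A_{R,S,\tau}\otimes \mathcal{B}_{\neg\phi}$ is a standard LTL model-checking argument: the product has empty language iff no generated word of $A_{R,S,\tau}$ satisfies $\neg\phi$, iff every team trajectory consistent with $(S,\tau)$ satisfies $\phi$. Since Algorithm~\ref{alg:synch_moments} returns only when \textit{test\_feasibility} reports ``feasible'', any returned $(S,\tau)$ is a valid solution to Problem~\ref{pr:synchronization}.

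\textbf{Termination.} I would anchor termination to the worst-case pair $S=\{1,\ldots,l\}$ with $\tau(j)=strong$ for every $j$: under this pair, rule~(c2) forces $A_{R,S,\tau}$ to generate exactly the word $\rho_G(R)$, which satisfies $\phi$ by hypothesis, so \textit{test\_feasibility} must return ``feasible''. It then suffices to bound the number of iterations before either an earlier feasible pair is found or this worst case is attained inside the $synch\_type=strong$ branch. Each successful inner test strictly enlarges $S$ by one element and resets $moment$ to $l$; since $|S|\le l$, at most $l$ such additions occur. Between two successive additions (and in the final phase, before the \textbf{while} guard fails), $moment$ decreases by one per unsuccessful inner test, so at most $l$ further iterations occur. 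Combined with the two outer $synch\_type$ values and the two inner $\tau_{temp}(moment)$ values, this yields the $3l(l+3)/2$ bound of the complexity remark and in particular finiteness.

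\textbf{Expected obstacle.} The delicate step is formalizing soundness and completeness of $A_{R,S,\tau}$, especially under partial synchronization (case (c1)): when only a proper subset $I$ of the robots has reached index $s$, those robots must idle while the others catch up, and one has to confirm that this discrete ``wait-for-the-laggards'' behavior faithfully captures the continuous facet-crossing dynamics from Sec.~\ref{sec:rob_abstraction}. A clean treatment is to associate, to any admissible continuous team trajectory respecting $(S,\tau)$, the discrete trace obtained by sampling at every cell-boundary crossing of any robot, and to verify index by index that the trace is a legal run of $A_{R,S,\tau}$; the converse direction is the easier one, because the drive-to-facet and invariance controllers give free choice of inter-facet traversal times and can therefore realize any legal run.
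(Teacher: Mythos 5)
Your proposal follows essentially the same route as the paper's proof: it establishes that $A_{R,S,\tau}$ captures exactly the team behaviors consistent with $(S,\tau)$ (via rules (a)--(c) and the acceptance condition of Algorithm \ref{alg:final_sets}), reduces correctness to the emptiness check of the product with $\mathcal{B}_{\neg\phi}$, and anchors termination to the fully strongly-synchronized pair $S=\{1,\ldots,l\}$ whose unique generated run is $R$ itself, with the same $3l(l+3)/2$ iteration count. Your closing remark on formalizing soundness/completeness of $A_{R,S,\tau}$ by sampling at cell-boundary crossings is a reasonable tightening of a step the paper treats informally, but it does not change the structure of the argument.
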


\begin{proof}
The correctness of Algorithm \ref{alg:synch_moments} and of the
``test\_feasibility" procedure results by the construction we performed in
this section, as detailed below.

{\bf Correctness of ``test\_feasibility" procedure:} We first prove that the
``test\_feasibility" procedure cannot yield a ``feasible" output if the
inputs $S$ and $\tau$ can produce a violation of the formula. This comes from
the requirements of the transition relation $\rightarrow_A$ and from the
construction of $F_A$. Requirement (a) from definition of $\rightarrow_A$
means that $A_{R,S,\tau}$ does not self-loops, except for the case when all
individual runs have suffixes of length 1 (in this case a self-loop exists
for reiterating the suffix). Requirement (b) results from the fact that every
robot $i$ follows its individual run $R^i$ and iterates the suffix; the fact
that there is no synchronization (a moving robot might not change its current
cell, while others advance along their individual runs) is captured by the
possibility of having $q_j=q'_j$ for some robots. Thus, requirements (a) and
(b) capture a global progress/movement along individual runs of robots, by
enforcing iterations of individual suffixes. If we ignore requirement (c),
and we assume that $F_A$ is just a single set constructed as in line 3 of
Algorithm \ref{alg:final_sets}, $A_{R,S,\tau}$ can generate any possible run
resulted while robots evolve without any synchronization (set $S$ and map
$\tau$ were not yet used). Thus, if ``test\_feasibility" returns a
``feasible" answer, it means that even the unsynchronized movement is
feasible, so the motion restricted by $(S,\tau)$ would definitely imply a
satisfaction of $\phi$. However, such an approach would be very conservative,
because it does not restrict the possible generated runs of $A_{R,S,\tau}$
based on $S$ and $\tau$\footnote{A similar conservative approach was used in
\cite{KB-TRO-2009}, where only unsynchronized movements could be tested, and
synchronization moments could not be handled.}.

{\bf Conservativeness reduction of $\rightarrow_A$:} Requirement (c)
restricts transitions based on synchronization moments and their type. Thus,
(c.(1)) guarantees that if one or more robots arrived at a synchronization
moment, then they will not continue following the individual runs (by
changing their states) unless all other robots arrived at that
synchronization moment. This way, (c.(1)) captures the fact that the weak
synchronization moments and the first part of the strong ones (visiting a
certain tuple) are satisfied. Requirement (c.(2)) ensures that a tuple of
cells corresponding to a strong synchronization moment is synchronously left
by all robots that have to move at that index (the moving robots
synchronously go to the next state from their individual run, while for other
robots $\beta_i(s+1)$ implies remaining in the same state/cell). The reduced
set of transitions of $A_{R,S,\tau}$ captures the satisfaction of all
synchronization moments, and includes all possible unsynchronized
(independent) movements of robots between synchronization moments. Therefore,
the correctness of ``test\_feasibility" is not affected, while its
conservativeness is reduced.

{\bf Construction of $F_A$:} If there is no synchronization moment in the
suffix of $R$, then $F_A$ contains only one set, equal to the cartesian
product of the sets of states composing the suffixes of individual runs
$R^i$, $i=1,\ldots ,n$ (line 3 in Algorithm \ref{alg:final_sets}). This is
because no synchronization moment in suffix of $R$ means no synchronization
moments in suffixes of individual runs. Therefore, the robots independently
follow their suffixes and any $n$-tuple from set $F_A$ can be infinitely
often observed during the team evolution. In this case $A_{R,S,\tau}$ is
still too conservative, because it can generate many runs that cannot result
from the actual movement of robots ({\it e.g.} infinite surveillance of just
two states from $F_A$).

The set of accepted runs is drastically reduced if there are $x$
synchronization moments in suffix of $R$ (and also in individual suffixes).
In this case, $F_A$ will contain $x$ sets (in Algorithm \ref{alg:final_sets},
$x=|S_{suffix}|$). Each of these sets contains just one tuple, which
corresponds to one synchronization moment along individual suffixes. This
construction comes from the following aspects: (i) due to synchronization
moments along suffixes, we have additional information about some infinitely
visited states, and considering $F_A$ as in line 3 of Algorithm
\ref{alg:final_sets} would be too conservative. (ii) If there are more
synchronization moments along suffix, having a single set of final states
that contains all the corresponding tuples would be again too conservative.
This is because the transitions of $A_{R,S,\tau}$ might allow infinitely
often revisits to just a single element of that set, and we might get
spurious counterexamples when testing $A_{R,S,\tau}$ against the negation of
LTL formula. (iii) In case of a strong synchronization moment $j$ along the
suffix, adding two states (tuples corresponding to $R(j)$ and $R(j+1)$) in
the corresponding element of $F_A$ would not bring any additional benefit
than adding just $R(j)$ (as done in Algorithm \ref{alg:final_sets}). This is
because requirement (c.(2)) implies that the only possible transition from
state corresponding to $R(j)$ is to the state corresponding to $R(j+1)$.
Therefore, construction of $F_A$ as in Algorithm \ref{alg:final_sets} further
reduces the conservativeness of $A_{R,S,\tau}$ by restricting its set of
generated runs with respect to $S$ and $\tau$. $A_{R,S,\tau}$ still generates
all possible sets of tuples that the team can follow while moments from $S$
with type $\tau$ are satisfied.

There is one more step in proving the correctness of ``test\_feasibility",
namely that there exists a pair $S,\tau$ for which ``test\_feasibility"
returns a ``feasible" output. This pair is $S=\{1,\ldots ,l\}$ and
$\tau(j)=strong$, $\forall j\in S$. Indeed, in this case every state of
$A_{R,S,\tau}$ has only one outgoing transition, and the only run generated
by $A_{R,S,\tau}$ is $R$. The word generated by $R$ satisfies formula $\phi$
(because $R$ was constructed by assuming it is strongly synchronized at every
position). Therefore, the language of the product between the degeneralized
$A_{R,S,\tau}$ and $\mathcal{B}_{\neg\phi}$ is empty, and
``test\_feasibility" returns ``feasible".

{\bf Correctness of Algorithm \ref{alg:synch_moments}:} We now prove that
Algorithm \ref{alg:synch_moments} returns a feasible pair $(S,\tau)$ in a
finite number of steps. First, we explain Algorithm \ref{alg:synch_moments}
and we show that the worst case ($S=\{1,\ldots ,l\}$ and $\tau(j)=strong$,
$\forall j\in S$) is returned, if no other less restrictive pair $(S,\tau)$
was encountered.

Algorithm \ref{alg:synch_moments} starts with $S=\emptyset$ and $\tau$
undefined, and increases $S$ with at most one moment at every iteration of
the ``while" loop from line 4. For this, it goes from the last index in
suffix of $R$ (moment $l$) towards the first one, and it constructs a
temporary set of synchronization moments ($S_{temp}$). $S_{temp}$ includes
all the moments from $S$ (initially none) and all indices after the current
moment until $l$. All moments following the currently tested one are first
assumed to be weakly synchronized, and if no solution is obtained, they will
be assumed strong (line 10 and ``for" loop starting on line 1). This is
because we consider a strong synchronization more disadvantageous than a weak
one, due to the waiting and communication at borders separating adjacent
cells. The current moment is first tested with a weak synchronization, and if
no feasible answer results, it is tested with strong synchronization (loop
starting on line 11). If the current test (with $S_{temp}$ and $\tau_{temp}$)
is feasible, we add to set $S$ only the current moment, with its current
synchronization type stored in map $\tau$ (lines 13, 14). Then, on line 15 we
update the lower bound for the current synchronization moment (it doesn't
make sense to go lower than the just found moment), and we start again the
while loop on line 4 (from moment $l$ towards the lower bound). For each
disjoint assignment of ``{\it synch\_type}" (``for" loop on line 1), the
currently tested moment inside the ``while" loop from line 4 is build on the
feasible moments existing in $S$ and $\tau$ until that instant (those moments
are included in $S_{temp}$ and $\tau_{temp}$ on lines 8, 9).

Once a feasible pair $S_{temp},\tau_{temp}$ is encountered, all future
iterations from Algorithm \ref{alg:synch_moments}) try to reduce the number
of moments from $S_{temp}$ and relax their synchronization type. In the worst
case the same pair will be returned: if no other feasible pair included in
this one is found, after a number of iterations of the ``while" loop the same
pair is again encountered (but this time, the first element of the
old/feasible $S_{temp}$ is already in $S$). Now, the second element from the
old $S_{temp}$ (the first from the new $S_{temp}$) is added to $S$ and the
``while" loop is again iterated, and so on. Thus, once a feasible pair is
encountered, Algorithm \ref{alg:synch_moments} finishes in a finite number of
steps.

For proving that a feasible pair $(S_{temp},\tau_{temp})$ is ever
encountered, we show that, if no other feasible pair is found, the algorithm
eventually tests the pair where $S_{temp}=\{1,\ldots ,l\}$ and
$\tau_{temp}(j)=strong$, $\forall j\in S_{temp}$ (this pair is deemed
feasible by the ``test\_feasibility" procedure, as shown before). When
$synch\_type=weak$ (first iteration of ``for" loop on line 1), we iterate the
``while" loop for $l$ times and we do not get any feasible result. For
$synch\_type=strong$ we would get $S_{temp}=\{1,\ldots ,l\}$ and
$\tau_{temp}(j)=strong$, $\forall j\in S_{temp}$ after another $l$ iterations
of the ``while" loop. Even though we get a ``feasible" answer from
``test\_feasibility", we add just the first moment from $S_{temp}$ to $S$ (so
$S=\{1\}$) and reiterate. This time, at the $(l-1)^{\rm th}$ iteration of the
``while" loop we would get the same pair $(S_{temp},\tau_{temp})$ (when
$moment=2$). Now, $S$ is updated to $S=\{1,2\}$ and the ``while" loop is
reiterated with $lower_{bound}=2$. Since each iteration of ``while" loop
includes 3 calls to the ``test\_feasibility" procedure, the total number of
such calls is: $3\left(l\,+\,l+(l-1)+(l-2)+\ldots +1\right)=3l(l+3)/2$.

By using a similar reasoning as above, it can be easily shown that if there
exists a set $S\subseteq\{1,\ldots ,l\}$ and a map $\tau$ for which
$test\_feasibility_\phi(R,S,\tau)$=``feasible", then the pair $(S,\tau)$ will
be encountered by Algorithm \ref{alg:synch_moments} if other feasible pair is
not encountered before. This shows why Algorithm \ref{alg:synch_moments} does
not necessarily find an optimal solution (with respect to a cost defined by
weighting different synchronization moments): once a feasible pair
$(S_{temp},\tau_{temp})$ is encountered, all future iterations test only
subsets of $S_{temp}$, not other sets from $\{1,\ldots ,l\}$.
\end{proof}

\subsection{Communication and Control Strategy}
\label{sec:communic_strategy}

We complete the solution to Problem \ref{pr:synchronization} by providing a
deployment strategy such that the synchronization moments from set $S$ with
type $\tau$ are correctly implemented. A {\it weak} synchronization at moment
$j$ along $R$ is achieved by enforcing each robot $i$ to wait inside cell
$c^i_j$ (and signal this to others) until it receives a similar signal from
all other robots. A {\it strong} synchronization at moment $j$ is
accomplished by first enforcing a weak synchronization at moment $j$, and
then temporary stopping the robots for which $c^i_j \neq c^i_{j+1}$ just
before leaving cell $c^i_j$ (and entering $c^i_{j+1}$).

Since we need individual strategies, we have to adapt the set $S$ and map
$\tau$ to descriptions suitable for each robot $i$ that moves along its
individual run $R^i$, $i=1,\ldots ,n$. To solve this, for each robot $i$ we
construct a queue memory $Q^i$, where each entry contains the index along
$R^i$ when there should be enforced a synchronization, and the
synchronization type. Alg. \ref{alg:construct_queue_mem} creates these queue
memories by adding $|S|$ entries, in the ascending order of moments from $S$.

\begin{algorithm}
\caption{Queue memory $Q^i$}
\begin{algorithmic}[1]\label{alg:construct_queue_mem}
\STATE $Q^i=\emptyset$ \FORALL{$s\in S$}
    \STATE Assume that $S$ is sorted and states in $S$ are enumerated sequentially
    \STATE $moment=\beta_i(s)$
    \STATE $type=\tau(s)$
    \STATE Add entry $[moment,type]$ at the end of $Q^i$
\ENDFOR
\end{algorithmic}
\end{algorithm}

\begin{algorithm}
\caption{Individual strategy for robot $i$}
\begin{algorithmic}[1]\label{alg:individual_strategy}
\STATE $j=1$ \WHILE{TRUE}
    \STATE Read first entry $[moment,type]$ from $Q^i$
    \STATE Read second entry $[next\_moment,next\_type]$ from $Q^i$
    \IF{$moment=j$}
        \WHILE{``ready" signals not received from all other robots}
            \STATE Broadcast a ``ready to synchronize" signal
            \STATE Apply convergence controller inside current cell $q^i_j$ from run $R^i$
        \ENDWHILE
        \IF{$next\_moment\neq j$}
            \STATE Apply controller driving to the next cell $q^i_{j+1}$ until border between $q^i_j$ and $q^i_{j+1}$ is reached
        \ENDIF
        \IF{$type=strong$}
            \WHILE{``ready" signals not received from all other robots}
                \STATE Broadcast a ``ready to synchronize" signal
                \IF{$next\_moment\neq j$}
                    \STATE Stop robot
                \ELSE
                    \STATE Apply convergence controller inside current cell $q^i_j$
                \ENDIF
            \ENDWHILE
        \ENDIF
        \IF{$j<k_i$}
            \STATE Remove first entry from $Q^i$
        \ELSE
            \STATE Move first entry from $Q^i$ to the end of $Q^i$
        \ENDIF
        \IF{$next\_moment\neq j$}
            \STATE $j:=j+1$
        \ENDIF
    \ELSE
        \IF{$q^i_j\neq q^i_{j+1}$}
            \STATE Apply controller driving to the next cell $q^i_{j+1}$ until border between $q^i_j$ and $q^i_{j+1}$ is reached
            \STATE $j:=j+1$
        \ELSE
            \STATE Apply convergence controller inside current cell $q^i_j$
        \ENDIF
    \ENDIF
\ENDWHILE
\end{algorithmic}
\end{algorithm}

The queues $Q^i$ will be used by the robots in a FIFO manner, as in Alg.
\ref{alg:individual_strategy}. Each robot follows the infinite run $R^i$, by
applying correct feedback controllers and by synchronizing with other robots
when required. After fulfilling a synchronization moment, the corresponding
entry from $Q^i$ is removed or moved to the end, depending on the inclusion
of the current moment in prefix or suffix. The current index in $R^i$ is
incremented based on specific conditions, for correctly handling the
situations when two or more synchronization moments from $S$ yield the same
value through map $\beta_i$. Note that the robots not changing their cell
when a strong synchronization moment is required still synchronize twice on
that moment (for the weak and then the strong part), but they apply a
convergence controller inside current cell.

\section{CASE STUDY REVISITED}\label{sec:case_study_solved}

This section concludes the case study illustrated throughout the paper, by
applying the procedure described in Sec. \ref{sec:find_synch_mom} to the run
from Eqn. (\ref{eqn:run_team_example}). We obtain only two weak
synchronization moments, at indices 8 and 12 of run $R$ (first and fifth
positions of every repetition of suffix). This makes sense, since the
propositions satisfied by the team at the two synchronization moments are the
two sets of regions ($\{\pi_1,\,\pi_4,\,\pi_6\}$ and $\{\pi_2,\,\pi_5\}$)
that are required to be visited for the satisfaction of the formula.

The individual runs of the robots are given in Eqn.
(\ref{eqn:indiv_runs_example}), where the square brackets delimitate each
suffix, and the two weak synchronization moments are marked in bold:
\begin{equation}\label{eqn:indiv_runs_example}
\begin{array}{l}R^1 = c_{7}\,c_{2}\,c_{1}\,c_{10}\,c_{9}\,c_{18}\,c_{16}\,\left[{\bf c_{31}}\right]\,\left[{\bf c_{31}}\right]\ldots \\
R^2 = c_{4}\,c_{3}\,c_{6}\,c_{8}\,c_{17}\,c_{11}\,\left[{\bf c_{12}}\,c_{11}\,c_{17}\,c_{8}\,{\bf c_{6}}\,c_{8}\,c_{17}\,c_{11}\right]\ldots \\
R^3 = c_{28}\,c_{25}\,c_{26}\,c_{24}\,\left[{\bf
c_{38}}\,c_{24}\,c_{27}\,{\bf c_{20}}\,c_{27}\,c_{24}\right]\ldots
\end{array}
\end{equation}

The control and communication protocol from Sec. \ref{sec:communic_strategy}
points to the following deployment strategy for each robot: the robot moves
along its individual run without any communication, until it encounters a
required synchronization moment. Then, it broadcasts that it is in a ``ready"
mode for the synchronization moment, and it waits inside the current cell
until it receives ``ready" signals for the current moment from all other
robots. After this, each robot evolves again individually (without any
synchronization) until the next synchronization moment. Note that for the
above example, once robot 1 reaches its suffix it keeps applying a
convergence controller inside $c_{31}$ and broadcasting a ``ready" signal (so
that it does not induce waiting for other robots).

Some snapshots from the deployment (implemented according to Sec.
\ref{sec:communic_strategy}) are shown in Fig. \ref{fig:snapshots}, where two
repetitions of the suffix for each robot are included. A movie for the case
study is available at \url{http://hyness.bu.edu/~software/unicycles.mp4}. For
comparison, if we avoided solving Problem \ref{pr:synchronization} and
instead used the deployment strategy from \cite{KB-TRO-2009}, we would get
the team trajectory illustrated by the movie
\url{http://hyness.bu.edu/~software/unicycles-full-synch.mp4}. In this movie,
we can see that the motions of the robots are not as ``smooth'' as our
proposed approach, and the iterations for each suffix require more time. Our
approach is more suitable for real experiments as robots have less frequent
stops at region borders.

\begin{figure*}
   \center
   \begin{tabular}{cccc}
         \includegraphics[width=.22\textwidth]{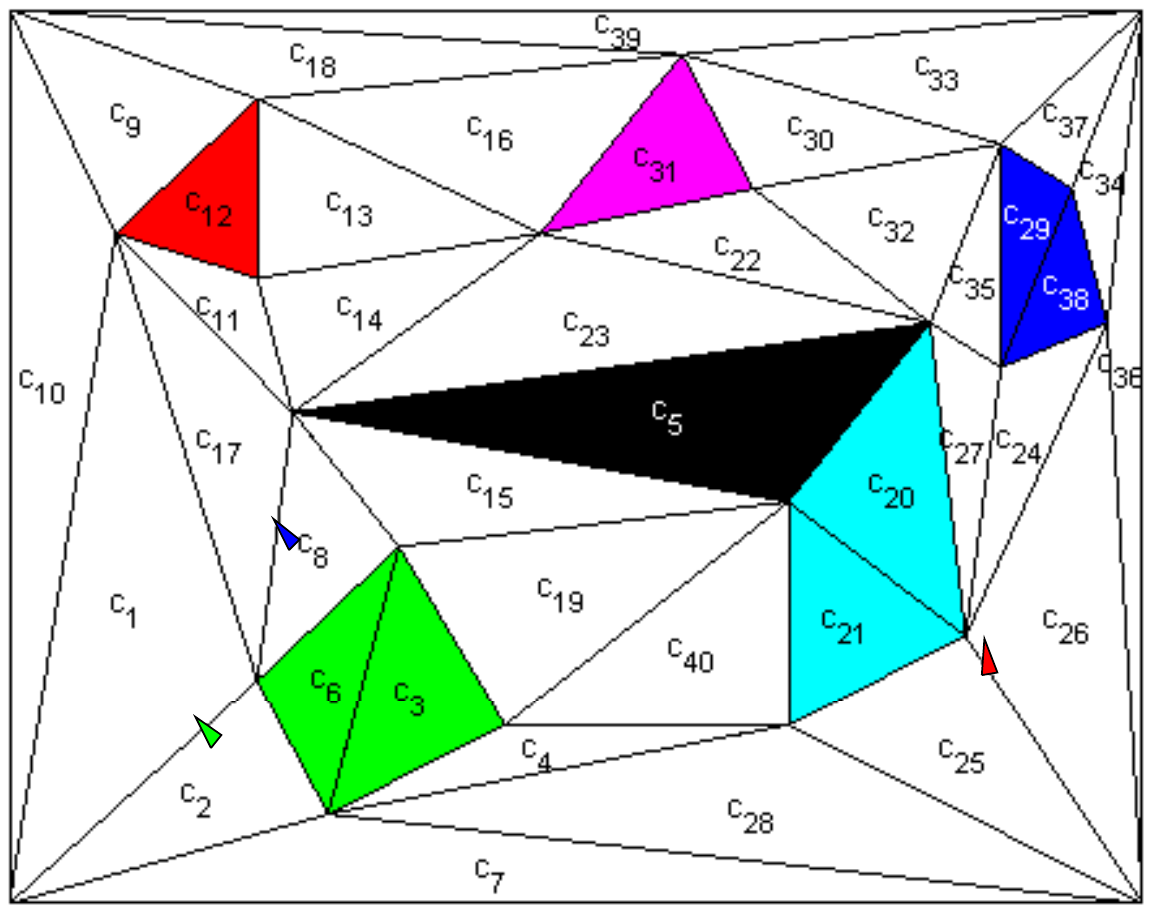} &
         \includegraphics[width=.22\textwidth]{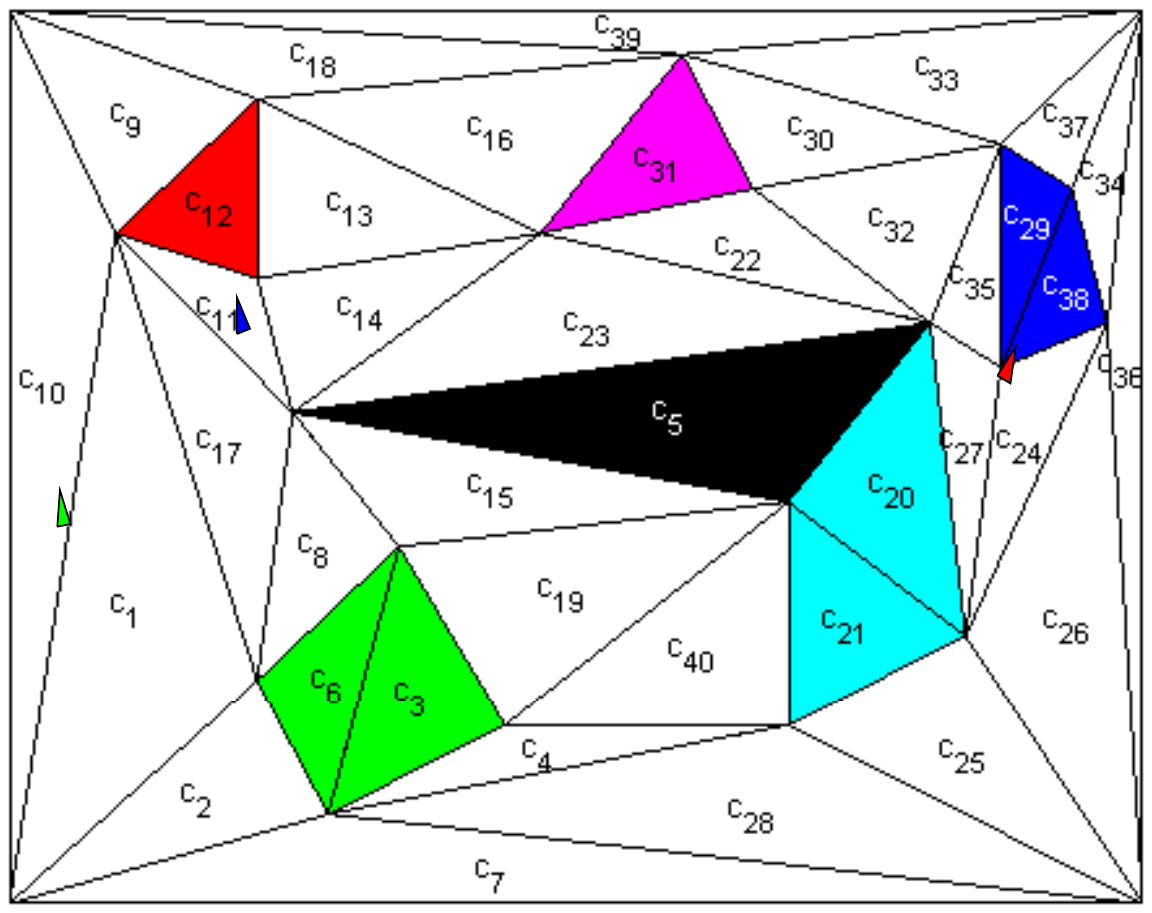} &
         \includegraphics[width=.22\textwidth]{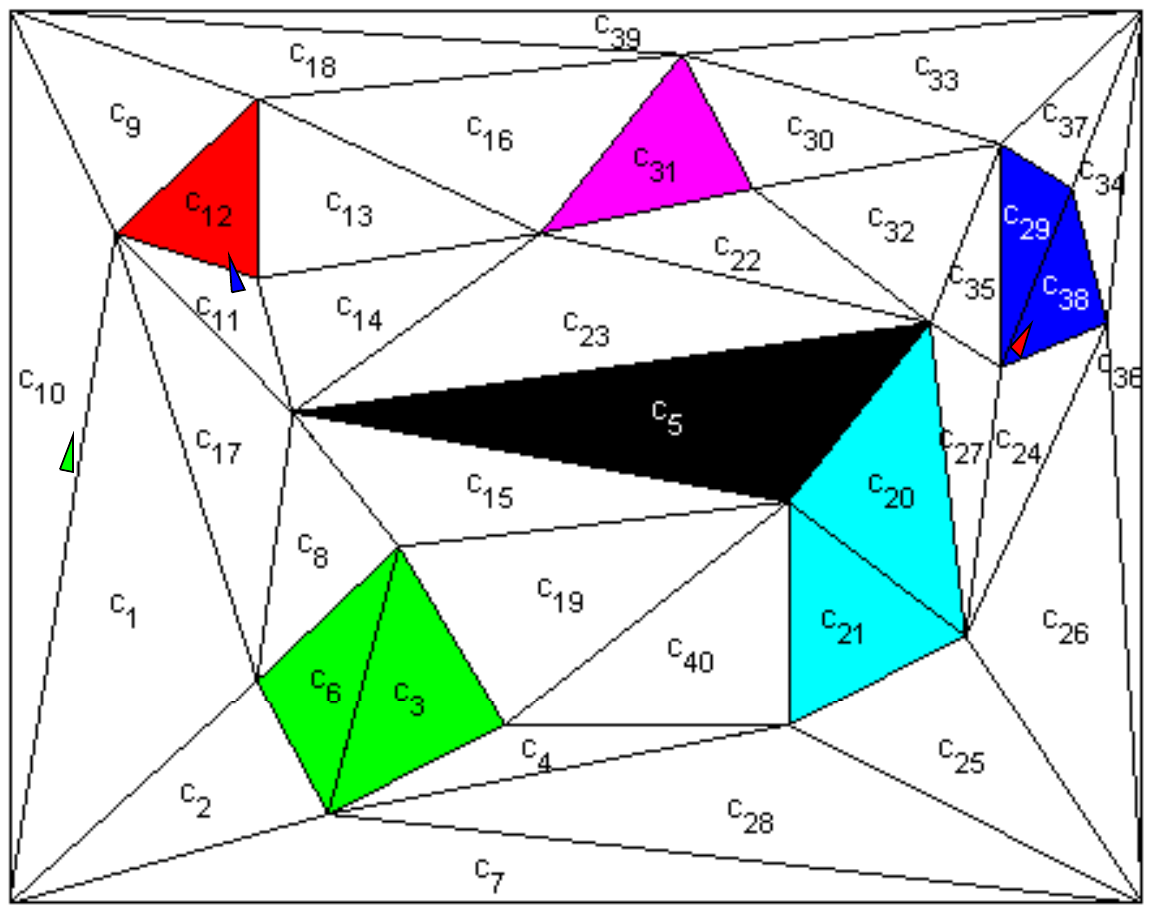} &
         \includegraphics[width=.22\textwidth]{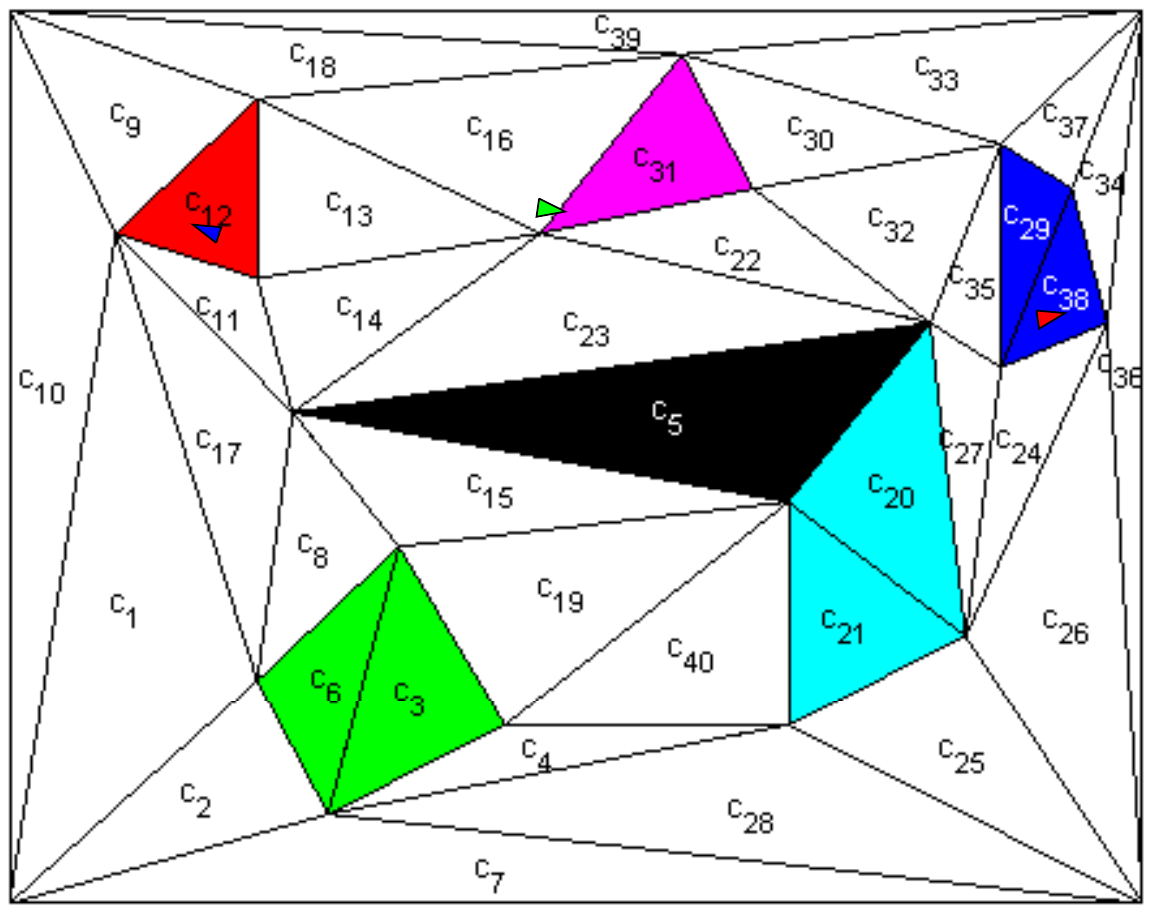}\\
         (a) & (b) & (c) &(d)\\
         \includegraphics[width=.22\textwidth]{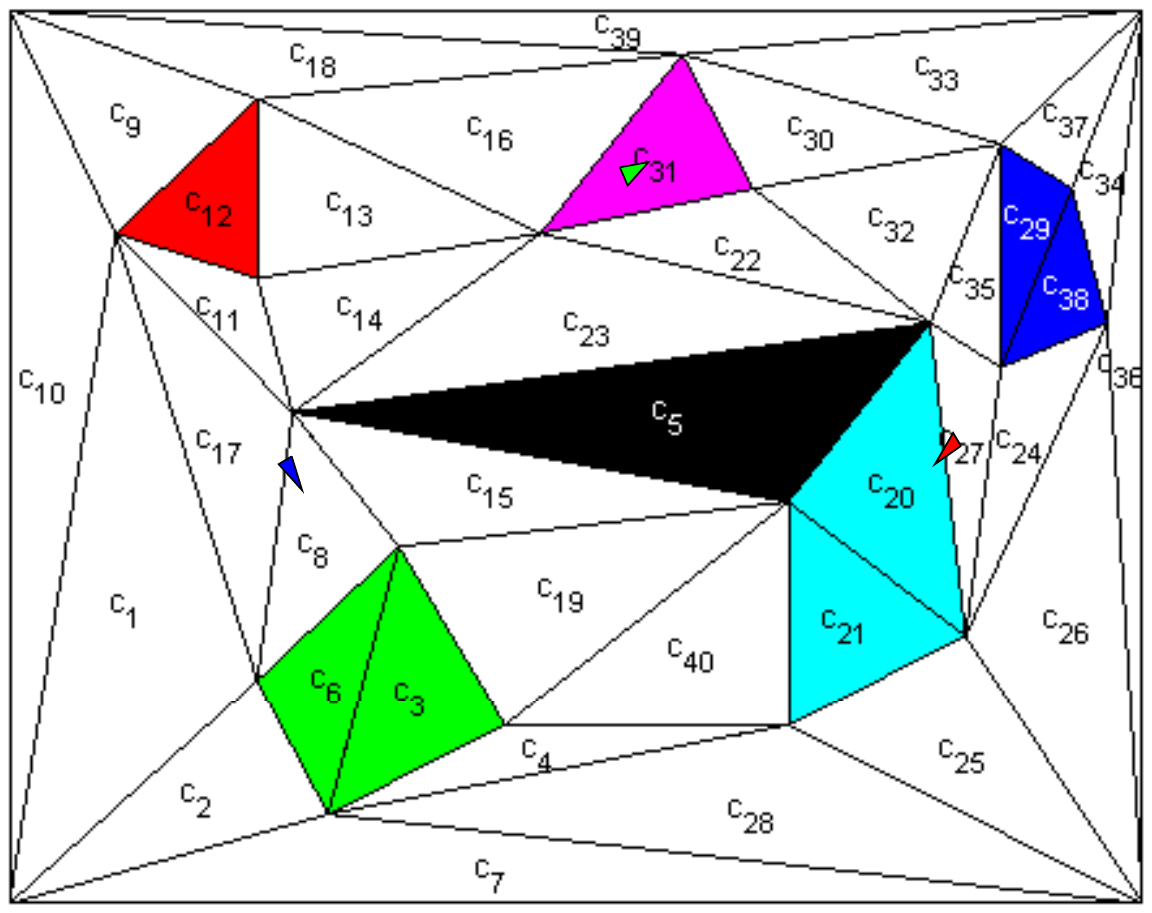} &
         \includegraphics[width=.22\textwidth]{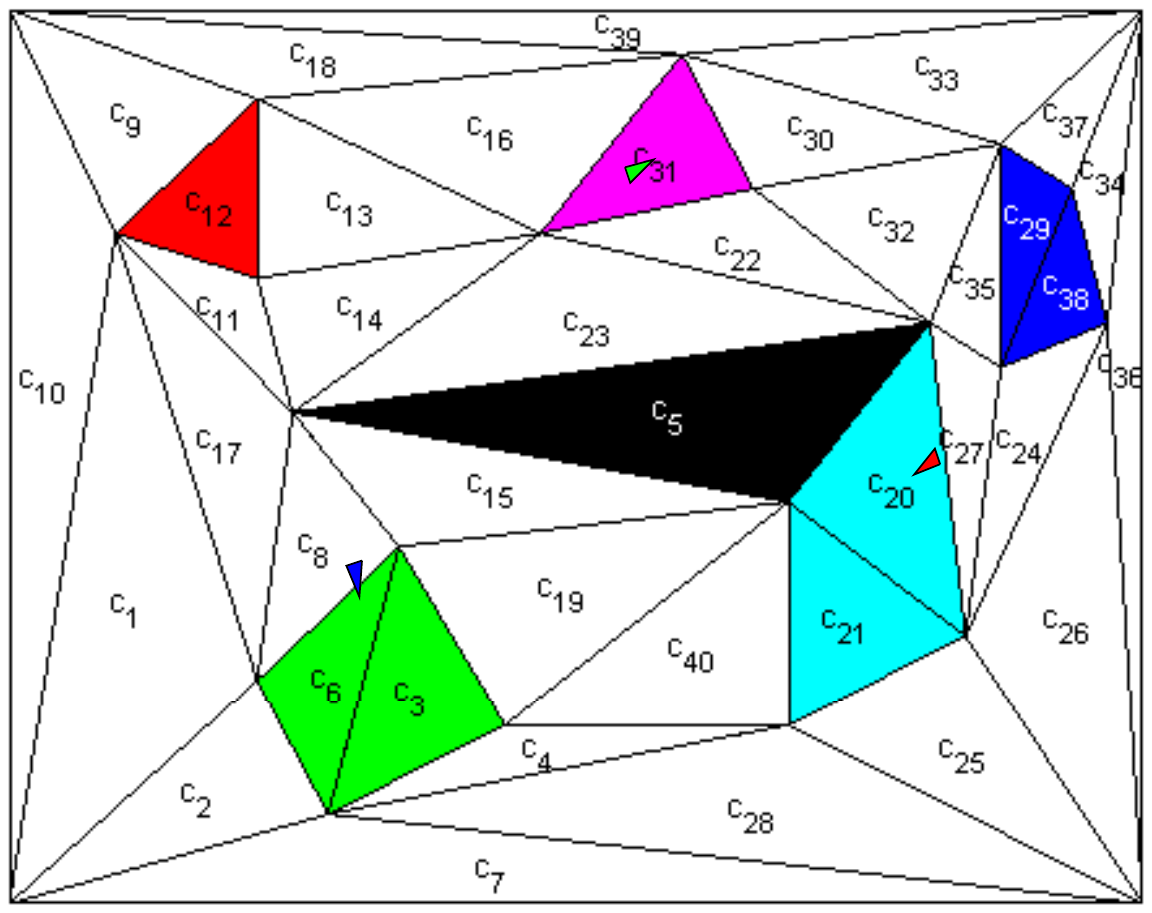} &
         \includegraphics[width=.22\textwidth]{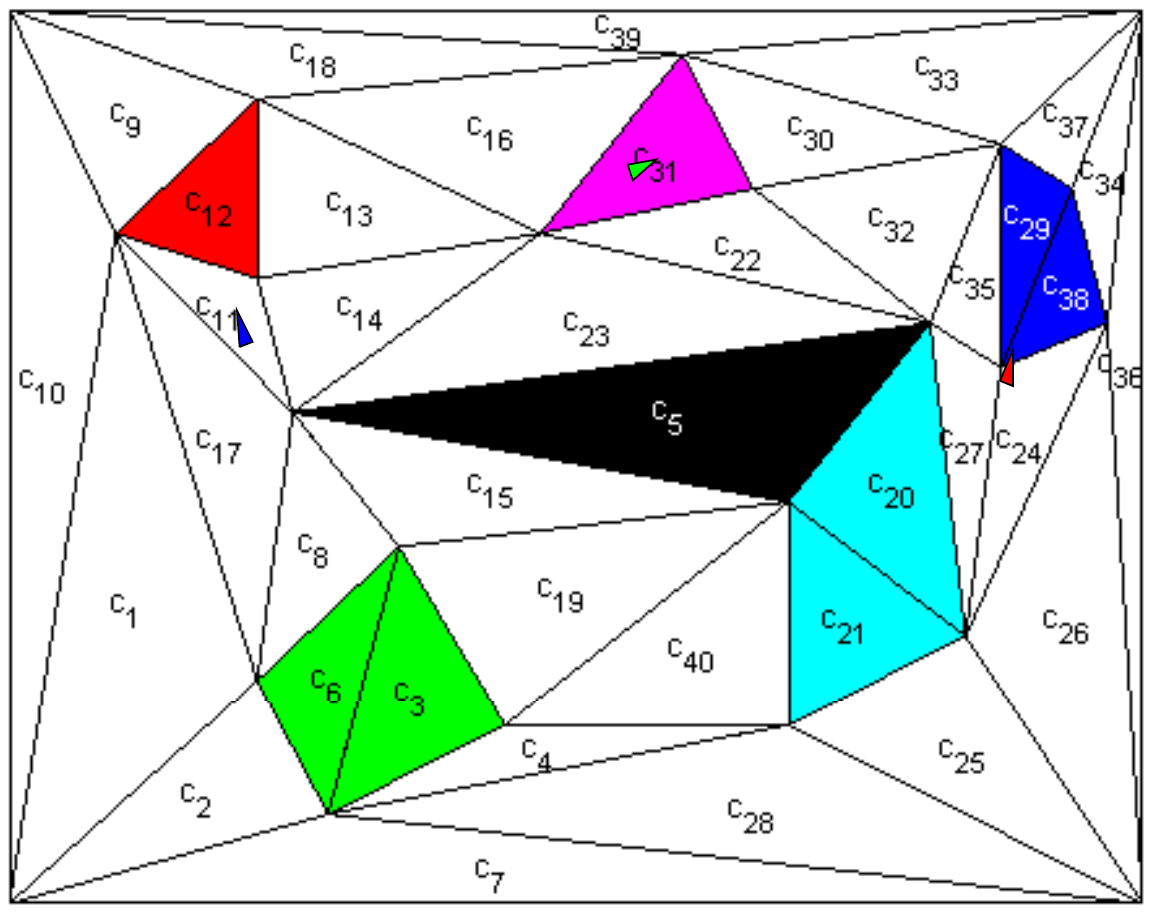}&
         \includegraphics[width=.22\textwidth]{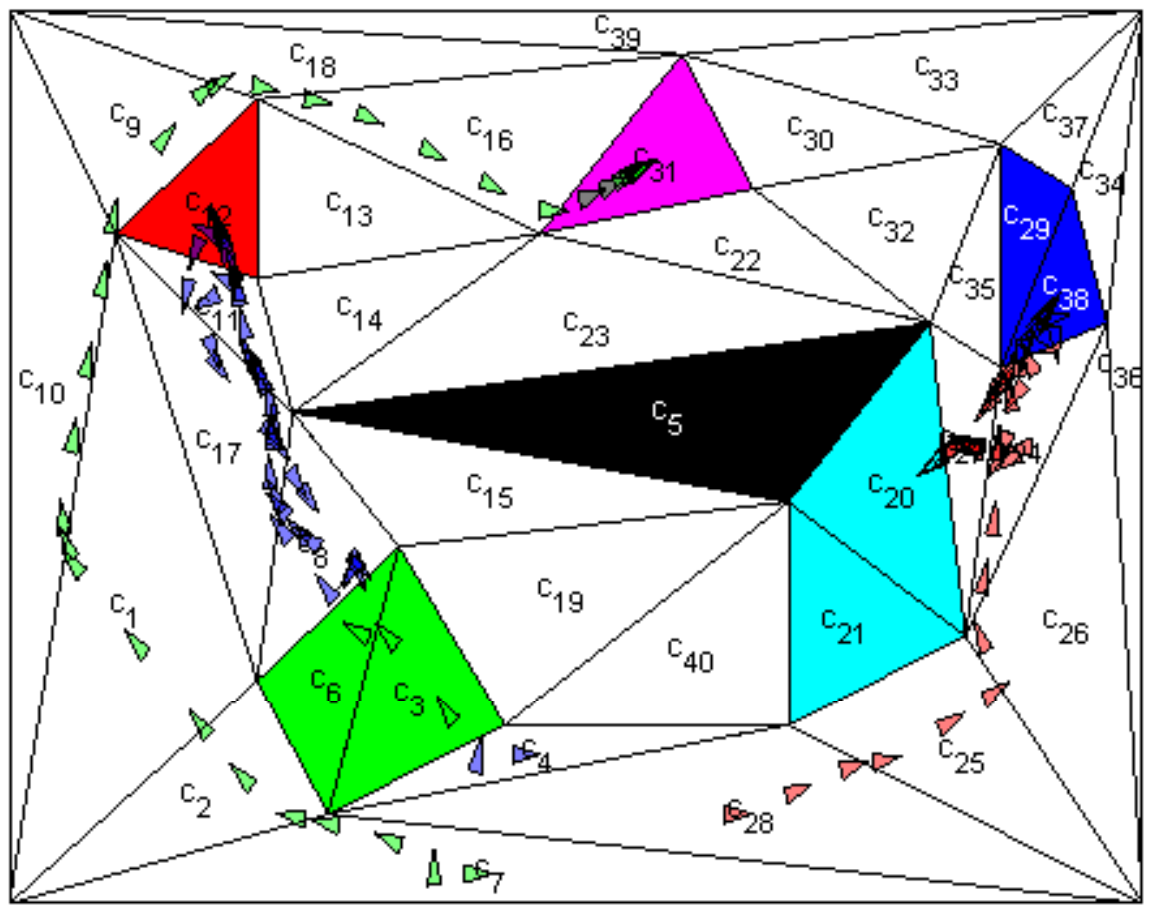}\\
         (e) & (f) & (g) &(h)\\
   \end{tabular}
\caption{Several configurations reached during movement ((a)$\ldots$(g)), and
traces of robots (h). (a) Robot 1 (green) heads to $c _{31}$, robot 2 (blue)
heads to $c_{12}$, robot 3 (red) heads to $c_{38}$; (b) Robot 3 begins to
wait for the first weak synchronization moment from suffix; (c) Robots 2 and
3 are waiting (by converging inside their regions); (d) First synchronization
moment is accomplished, all robots continue normal movement; (e) Robot 3
begins waiting for the second synchronization moment; (f) Synchronization
accomplished, robots 2 and 3 head back to $c_{12}$ and $c_{38}$ respectively;
(g) Robot 3 begins waiting for the first synchronization moment from suffix,
and when robot 2 reaches $c_{12}$ a new iteration of suffix begins.}
\label{fig:snapshots}
\end{figure*}

{\bf Computation time:} The most computationally intensive part of the
solution to Problem \ref{pr:unicycles} is finding a run $R$ (as in Sec.
\ref{sec:rob_abstraction} and \ref{sec:team_run}). For our case study, this
took about 100 minutes on a medium performance computer. In contrast, the
solution we proposed for Prob. \ref{pr:synchronization} (Sec.
\ref{sec:find_synch_mom}) took only 30 seconds. To generate a solution for
Prob. \ref{pr:synchronization}, 26 iterations of the {\it test\_feasibility}
procedure were performed until the set of synchronization moments was found.

\section{ADDITIONAL EXAMPLE}\label{sec:examples}

This section briefly presents another example on the same environment as in
Fig. \ref{fig:environment}, but considering two robots and the LTL formula:
\begin{equation}
\begin{split}
\phi &= \{\neg(\pi_3 \vee \pi_5) U (\pi_1 \wedge \pi_2)\} \wedge\\
&\{\neg(\pi_3 \vee \pi_4 \vee \pi_5) U (\pi_4 \wedge \pi_5)\}
\wedge \{\diamondsuit\square(\pi_3 \vee \pi_6)\}
\end{split}
\end{equation}

The curly brackets are only inserted to logically delimitate the
requirements: (i) black and cyan regions ($\pi_3$ and $\pi_5$) should be
avoided {\it until} red and green regions are visited, and (ii) black region
is avoided until blue and cyan regions are entered {\it at the same time},
and (iii) eventually either the black or the magenta region should be visited
and never left after that.

By considering the robots initially deployed in regions $c_1$ and
$c_{26}$ respectively, we obtained a run for the team with 12
tuples in prefix and one in suffix. Informally, first robot goes
to red region and the second one to the green one, then the robots
move toward blue and cyan regions, and after these regions are
simultaneously entered, the second robot goes to the black region
and converge there, while the first robot remains in the blue
region.

By searching synchronization moments, we obtain only two such moments: a {\it
weak} one, at the tuple when robots visit the red and green regions, and a
{\it strong} one at the tuple before visiting the blue and cyan regions. The
second synchronization moment guarantees that the blue and cyan regions are
entered at the same time. These synchronization moments automatically found
by our approach are natural at an insightful study of the requirement. The
movement of the robots is depicted in the movie available at
\url{http://hyness.bu.edu/~software/unicycles-example2.mp4}. In this movie
one can observe both synchronization moments: (1) red robot arrives in the
red region and it starts to converge there until the green robot visits the
green region; (2) green robot arrives in cell $c_{40}$ and starts to converge
there until red robot visit cell $c_{37}$ (this is the weak synchronization
part of the strong moment), and then the robots move towards blue and cyan
regions. Red robot stops at the border between $c_{37}$ and $c_{29}$ and
waits until the green robot reaches the border between $c_{40}$ and $c_{21}$
- this ensures the strong part of the synchronization (simultaneously change
occupied cells). After this, no other communication between robots is
required.

\addtolength{\textheight}{-4cm}   

\section{CONCLUSIONS}
\label{sec:concl}

We presented a fully automated framework for deploying a team of unicycles
from a task specified as a linear temporal logic formula over some regions of
interest. The approach consists of abstracting the motion capabilities of
each robot into a finite state representation, using model checking tools to
find a satisfying run, and mapping the solution to a communication and
control strategy for each unicycle. The main contribution of the paper is the
development of an algorithmic procedure that returns a reduced set of moments
when the robots should communicate and synchronize, with the guarantee that
the specification is satisfied. A secondary contribution is the integration
of this algorithm as part of a fully automatic procedure for deployment of
teams of unicycles from specifications given as LTL formulas over regions of
interest in an environment. Future research direction includes extending this
framework to probabilistic systems such as Markov Decision Processes (MDPs)
or Partially Observable Markov Decision Processes (POMDPs), for satisfaction
of probabilistic temporal logics, such as probabilistic LTL or probabilistic
CTL.

\bibliographystyle{IEEEtran}
\bibliography{IEEEabrv,CDC_2011_bib}

\begin{thebibliography}{10}
\providecommand{\url}[1]{#1}
\csname url@rmstyle\endcsname
\providecommand{\newblock}{\relax}
\providecommand{\bibinfo}[2]{#2}
\providecommand\BIBentrySTDinterwordspacing{\spaceskip=0pt\relax}
\providecommand\BIBentryALTinterwordstretchfactor{4}
\providecommand\BIBentryALTinterwordspacing{\spaceskip=\fontdimen2\font plus
\BIBentryALTinterwordstretchfactor\fontdimen3\font minus
  \fontdimen4\font\relax}
\providecommand\BIBforeignlanguage[2]{{%
\expandafter\ifx\csname l@#1\endcsname\relax
\typeout{** WARNING: IEEEtran.bst: No hyphenation pattern has been}%
\typeout{** loaded for the language `#1'. Using the pattern for}%
\typeout{** the default language instead.}%
\else
\language=\csname l@#1\endcsname
\fi
#2}}

\bibitem{SML:06}
S.~M. LaValle, \emph{Planning Algorithms}.\hskip 1em plus 0.5em minus
  0.4em\relax Cambridge University Press, 2006, available at
  http://planning.cs.uiuc.edu.

\bibitem{SR-PN:03}
S.~Russell and P.~Norvig, \emph{Artificial Intelligence: A Modern Approach},
  2nd~ed.\hskip 1em plus 0.5em minus 0.4em\relax Prentice Hall, 2003.

\bibitem{ER-DEK:92}
E.~Rimon and D.~E. Koditschek, ``Exact robot navigation using artificial
  potential functions,'' \emph{IEEE Transactions on Robotics and Automation},
  vol.~8, no.~5, pp. 501--518, 1992.

\bibitem{SML-JJK:01}
S.~M. LaValle and J.~J. Kuffner, ``Randomized kinodynamic planning,''
  \emph{International Journal of Robotics Research}, vol.~20, no.~5, pp.
  378--400, 2001.

\bibitem{RT-IRM-MMT-JWR:10}
R.~Tedrake, I.~R. Manchester, M.~M. Tobenkin, and J.~W. Roberts, ``{LQR}-trees:
  {F}eedback motion planning via sums of squares verification,''
  \emph{International Journal of Robotics Research}, vol.~29, no.~8, pp.
  1038--1052, 2010.

\bibitem{Antoniotti95}
M.~Antoniotti and B.~Mishra, ``Discrete event models + temporal logic =
  supervisory controller: {A}utomatic synthesis of locomotion controllers,'' in
  \emph{{IEEE} Int. Conf. on Robotics and Automation}, Nagoya, Japan, 1995, pp.
  1441--1446.

\bibitem{Karaman_mu_09}
S.~Karaman and E.~Frazzoli, ``Sampling-based motion planning with deterministic
  $\mu$-calculus specifications,'' in \emph{{IEEE} Conf. on Decision and
  Control}, Shanghai, China, 2009, pp. 2222 -- 2229.

\bibitem{KB-TAC08-LTLCon}
M.~Kloetzer and C.~Belta, ``A fully automated framework for control of linear
  systems from temporal logic specifications,'' \emph{IEEE Transactions on
  Automatic Control}, vol.~53, no.~1, pp. 287--297, 2008.

\bibitem{Hadas-ICRA07}
H.~Kress-Gazit, G.~Fainekos, and G.~J. Pappas, ``Where's {W}aldo?
  {S}ensor-based temporal logic motion planning,'' in \emph{{IEEE} Int. Conf.
  on Robotics and Automation}, Rome, Italy, 2007, pp. 3116--3121.

\bibitem{Loizou04}
S.~G. Loizou and K.~J. Kyriakopoulos, ``Automatic synthesis of multiagent
  motion tasks based on {LTL} specifications,'' in \emph{{IEEE} Conf. on
  Decision and Control}, Paradise Island, Bahamas, 2004, pp. 153--158.

\bibitem{Quottrup04}
M.~M. Quottrup, T.~Bak, and R.~Izadi-Zamanabadi, ``Multi-robot motion planning:
  A timed automata approach,'' in \emph{{IEEE} Int. Conf. on Robotics and
  Automation}, New Orleans, LA, 2004, pp. 4417--4422.

\bibitem{Tok-Ufuk-Murray-CDC09}
T.~Wongpiromsarn, U.~Topcu, and R.~M. Murray, ``Receding horizon temporal logic
  planning for dynamical systems,'' in \emph{{IEEE} Conf. on Decision and
  Control}, Shanghai, China, 2009, pp. 5997--6004.

\bibitem{baier2008principles}
C.~Baier, J.-P. Katoen, and K.~G. Larsen, \emph{Principles of Model
  Checking}.\hskip 1em plus 0.5em minus 0.4em\relax MIT Press, 2008.

\bibitem{Clarke99}
E.~M. Clarke, D.~Peled, and O.~Grumberg, \emph{Model checking}.\hskip 1em plus
  0.5em minus 0.4em\relax MIT Press, 1999.

\bibitem{automata-book07}
J.~Hopcroft, R.~Motwani, and J.~D. Ullman, \emph{Introduction to Automata
  Theory, Languages, and Computation}.\hskip 1em plus 0.5em minus 0.4em\relax
  Addison Wesley, 2007.

\bibitem{Alur00}
R.~Alur, T.~A. Henzinger, G.~Lafferriere, and G.~J. Pappas, ``Discrete
  abstractions of hybrid systems,'' \emph{Proceedings of the IEEE}, vol.~88,
  pp. 971--984, 2000.

\bibitem{belta2006controlling}
C.~Belta and L.~Habets, ``Controlling a class of nonlinear systems on
  rectangles,'' \emph{IEEE Transactions on Automatic Control}, vol.~51, no.~11,
  pp. 1749--1759, 2006.

\bibitem{burridge1999sequential}
R.~Burridge, A.~Rizzi, and D.~Koditschek, ``Sequential composition of
  dynamically dexterous robot behaviors,'' \emph{The International Journal of
  Robotics Research}, vol.~18, no.~6, p. 534, 1999.

\bibitem{conner2006integrated}
D.~Conner, H.~Choset, and A.~Rizzi, ``Integrated planning and control for
  convex-bodied nonholonomic systems using local feedback control policies,''
  \emph{Proceedings of robotics: Science and systems II}, 2006.

\bibitem{DOK-ICRA98}
J.~Desai, J.~Ostrowski, and V.~Kumar, ``Controlling formations of multiple
  mobile robots,'' in \emph{Proc. IEEE Int. Conf. Robot. Automat.}, Leuven,
  Belgium, 1998.

\bibitem{HabColSchup06}
L.~Habets, P.~Collins, and J.~van Schuppen, ``Reachability and control
  synthesis for piecewise-affine hybrid systems on simplices,'' \emph{IEEE
  Transactions on Automatic Control}, vol.~51, pp. 938--948, 2006.

\bibitem{Milner89}
R.~Milner, \emph{Communication and concurrency}.\hskip 1em plus 0.5em minus
  0.4em\relax Prentice-Hall, 1989.

\bibitem{VW86}
M.~Y. Vardi and P.~Wolper, ``An automata-theoretic approach to automatic
  program verification,'' in \emph{Logic in Computer Science}, 1986, pp.
  322--331.

\bibitem{Holzmann97}
G.~Holzmann, ``The model checker {SPIN},'' \emph{IEEE Transactions on Software
  Engineering}, vol.~25, no.~5, pp. 279--295, 1997.

\bibitem{DiVinE}
J.~Barnat, L.~Brim, and P.~Ro\v{c}kai, ``{D}i{V}in{E} 2.0: {H}igh-performance
  model checking,'' in \emph{High Performance Computational Systems
  Biology}.\hskip 1em plus 0.5em minus 0.4em\relax IEEE Computer Society Press,
  2009, pp. 31--32.

\bibitem{KB-TRO-2009}
M.~Kloetzer and C.~Belta, ``Automatic deployment of distributed teams of robots
  from temporal logic motion specifications,'' \emph{IEEE Transactions on
  Robotics}, vol.~26, no.~1, pp. 48--61, 2010.

\bibitem{KB-ICNSC-06}
------, ``{LTL} planning for groups of robots,'' in \emph{IEEE International
  Conference on Networking, Sensing, and Control}, Ft. Lauderdale, FL, 2006.

\bibitem{mukund2002}
M.~Mukund, ``From global specifications to distributed implementations,'' in
  \emph{Synthesis and control of discrete event systems}.\hskip 1em plus 0.5em
  minus 0.4em\relax Kluwer, 2002, pp. 19--34.

\bibitem{yushandars}
Y.~Chen, X.~Ding, A.~Stefanescu, and C.~Belta, ``A formal approach to
  deployment of robotic teams in an urban-like environment,'' in \emph{10th
  International Symposium on Distributed Autonomous Robotics Systems (DARS
  2010)}, 2010 (to appear).

\bibitem{triangle-soft}
J.~Shewchuk, ``Triangle,'' http://www.cs.cmu.edu/ \texttildelow
  quake/triangle.html.

\bibitem{cdd-soft}
K.~Fukuda, ``cdd/cdd+ package,'' http://www.ifor.math.ethz.ch/ \texttildelow
  fukuda/cdd\texttt{\char`\_}home/.

\bibitem{Gastin01}
P.~Gastin and D.~Oddoux, ``Fast {LTL} to {B}{\"u}chi automata translation,'' in
  \emph{Conf. on Computer Aided Verification}, ser. Lecture Notes in Computer
  Science, no. 2102.\hskip 1em plus 0.5em minus 0.4em\relax Springer, 2001, pp.
  53--65.

\bibitem{Wolper83}
P.~Wolper, M.~Vardi, and A.~Sistla, ``Reasoning about infinite computation
  paths,'' in \emph{Proceedings of the 24th IEEE Symposium on Foundations of
  Computer Science}, E.~N. et~al., Ed., Tucson, AZ, 1983.

\end{thebibliography}

\end{document}